\def\eqref#1{equation~\ref{#1}}
\def\1{\bm{1}}
\DeclareMathAlphabet{\mathsfit}{\encodingdefault}{\sfdefault}{m}{sl}
\SetMathAlphabet{\mathsfit}{bold}{\encodingdefault}{\sfdefault}{bx}{n}
\newtheorem{definition}{Definition}
\newtheorem{theorem}{Theorem}
\newtheorem{lemma}{Lemma}
\newtheorem{proposition}{Proposition}
\newtheorem{assumption}{Assumption}
\newtheorem{corollary}{Corollary}
\newtheorem{remark}{Remark}
\title{The Alignment Bottleneck}
\author{Wenjun Cao \\
Independent Researcher \\
\texttt{wenjun.cao.research@gmail.com}
}
\begin{document}

\maketitle

\begin{abstract}
Large language models improve with scale, yet feedback-based alignment still exhibits systematic deviations from intended behavior. Motivated by bounded rationality in economics and cognitive science, we view judgment as resource-limited and feedback as a constrained channel; on this basis we model the loop as a two-stage cascade $U\!\to\!H\!\to\!Y$ given $S$, with cognitive capacity $C_{\mathrm{cog}\mid S}$ and average total capacity $\bar C_{\mathrm{tot}\mid S}$. Our main result is a capacity coupled Alignment Performance Interval. It pairs a data size independent Fano lower bound proved on a separable codebook mixture with a PAC--Bayes upper bound whose KL term is controlled by the same channel via $m\,\bar C_{\mathrm{tot}\mid S}$. The PAC--Bayes bound becomes an upper bound on the same true risk when the canonical observable loss is used and the dataset is drawn from the same mixture. Under these matched conditions both limits are governed by a single capacity. Consequences include that, with value complexity and capacity fixed, adding labels alone cannot cross the bound; attaining lower risk on more complex targets requires capacity that grows with $\log M$; and once useful signal saturates capacity, further optimization tends to fit channel regularities, consistent with reports of sycophancy and reward hacking. The analysis views alignment as interface engineering: measure and allocate limited capacity, manage task complexity, and decide where information is spent.
\end{abstract}

\section{Introduction}

Scaling laws continue to improve LLM capabilities \citep{kaplan2020scalinglawsneurallanguage, wei2022emergentabilitieslargelanguage, hestness2017deeplearningscalingpredictable}, but feedback-based alignment shows a tension: instruction-following improves on average, while systematic deviations from intended behavior persist. In practice, feedback-based alignment pipelines have substantially improved instruction following \citep{ouyang_training_2022, bai_constitutional_2022, ziegler_fine-tuning_2020, NEURIPS2023_91f18a12, rafailov_direct_2024, lee2024rlaif}. Nevertheless, models continue to exhibit sycophancy, reward hacking, and inverse scaling on truthfulness \citep{perez_discovering_2023, sharma_towards_2024, pan_effects_2022, denison_sycophancy_2024, lin_truthfulqa_2022, amodei2016concreteproblemsaisafety}. A natural question is whether these patterns partly reflect a structural limit of the human–AI feedback loop.

Motivating evidence spans economics and cognitive science: bounded rationality views decisions as resource-limited and often satisficing \citep{Simon1955Behavioral}; computational and information-theoretic models then show that people compress task representations and trade performance for cognitive cost \citep{lewis_computational_2014, ho_people_2022, ho_efficiency_2020, zenon_information-theoretic_2019, zaslavsky-etal-2021-rate}. Rate–distortion and information-bottleneck perspectives connect these constraints to perception, control, and RL \citep{sims_ratedistortion_2016, ortega_information_2011, Lai2021, arumugam_bayesian_2023, arumugam_deciding_2022}. These results motivate treating feedback as information passing through a bounded system rather than as a noiseless oracle.

With this empirical and theoretical background, we model the feedback loop as a two-stage cascade $U\!\to\!H\!\to\!Y$ given context $S$: latent human values $U$ are first compressed into internal judgments $H$, then articulated as observable signals $Y$. We define the total conditional capacity $\bar C_{\mathrm{tot}\mid S}=\mathbb{E}[\min\{C_{\mathrm{cog}\mid S},\,C_{\mathrm{art}\mid S}\}]$ and highlight the cognitive capacity $C_{\mathrm{cog}\mid S}$ as the typical bottleneck through which value information must pass, and we connect rate--distortion and information-bottleneck ideas, focusing on what fidelity is possible under the human--AI channel’s cognitive capacity \citep{tishby_information_2000, tishby_deep_2015, alemi_deep_2017, Kolchinsky_2019, kawaguchi_how_2023, saxe_information_2019, shwartzziv2017openingblackboxdeep, pmlr-v97-goldfeld19a}.

We establish a capacity link: the same capacity that limits value information entering the data also governs the statistical complexity needed for generalization. On the lower-bound side (Sec.~\ref{sec:unified_lower}), using separable codebooks and Fano, we obtain a data-size–independent information lower bound on true risk,
\[
R_{\mathrm{mix}}(\pi)\ \ge\ (\varepsilon+\Delta)\,\Big(1-\tfrac{\bar C_{\mathrm{tot}\mid S}^{\mathrm{mix}}+\log 2}{\log M}\Big)_+.
\]
On the upper-bound side (Sec.~\ref{sec:upper}), via PAC--Bayes for bounded observable losses and the link between KL complexity and dataset–parameter mutual information \citep{xu_information-theoretic_2017, russo_how_2019, rodriguez-galvez_more_2024, lotfi_non-vacuous_2024, dziugaite2017computingnonvacuousgeneralizationbounds}, we show
\[
\mathbb{E}_{\mathcal{D}}\!\big[\mathrm{KL}(P\|Q)\big]\ \le\ m\,\bar C_{\mathrm{tot}\mid S}\ +\ m\,I(U;S)\ +\ \rho\ +\ \mathrm{KL}\!\big(p(\theta)\|Q\big),
\]
which renders the upper bound explicit for the same channel that defines the converse. Taken together, under the canonical observable loss and under the same codebook mixture used in the converse, we obtain a capacity coupled interval:
\[
(\varepsilon+\Delta)\Big(1-\tfrac{\bar C_{\mathrm{tot}\mid S}^{\mathrm{mix}}+\log 2}{\log M}\Big)_+
\ \le\ R_{\mathrm{mix}}(\pi)\ \le\
\mathbb E_{\theta\sim P}\!\big[\widehat R^{\mathrm{obs}}_m(\theta)\big]
+\sqrt{\tfrac{\mathrm{KL}(P\|Q)+\log(1/\delta)}{2m}},
\]
The KL term is further controlled in expectation by the same channel capacity. To our knowledge, prior analyses have not coupled a Fano-type lower bound and a PAC--Bayes upper bound through a single capacity term of the human–AI channel.

These bounds imply that increasing the dataset size $m$ alone cannot overcome the lower bound when separability and capacity are fixed; achieving a target risk requires capacity that scales with value complexity, which constrains pluralistic or multi-objective alignment; and once a useful signal saturates capacity, powerful optimizers can continue to reduce empirical loss by fitting residual channel regularities, consistent with reports of sycophancy and related behaviors \citep{perez_discovering_2023, sharma_towards_2024}.

\section{Related Work}

\textbf{Feedback Alignment and Systematic Deviations}
Contemporary alignment trains policies to preference signals using feedback-driven pipelines that collect preference data and adjust behavior under varied supervision protocols \citep{ouyang_training_2022, christiano_deep_2017, ziegler_fine-tuning_2020, bai_training_2022, NEURIPS2023_91f18a12, rafailov_direct_2024, lin_limited_2024, bai_constitutional_2022, ethayarajh_kto_2024, guo_controllable_2024, lee2024rlaif, NEURIPS2024_c4e380fb}. Despite gains, models display systematic deviations such as sycophancy, reward hacking, and inverse scaling on truthfulness, and raise sequential concerns including user tampering \citep{perez_discovering_2023, sharma_towards_2024, pan_effects_2022, denison_sycophancy_2024, lin_truthfulqa_2022, evans_user_2023}. Formal accounts situate these deviations in system-level incentives and representation/oversight mismatches across aggregation and interaction protocols \citep{ge_axioms_2024, irving_ai_2018, everitt_reward_2021, ngo_alignment_2024, rane_concept_2024}. Related analyses emphasize dynamic optimization effects in which proxy-reward gains can diverge from target behavior under increasing optimization pressure \citep{gaikwad_murphys_2025}, with empirical scaling in model size and KL budgets and $\sqrt{\mathrm{KL}}$-type ceilings \citep{rafailov2024scaling, mroueh2025information}. We instead take a static source--channel view: the same capacity term---typically governed by $C_{\mathrm{cog}\mid S}$---controls the Fano lower bound and the PAC--Bayes complexity, yielding an optimizer-agnostic interval.

\textbf{Bounded Rationality and Cognitive Constraints}
Originating in economics and organizational theory as bounded rationality and satisficing \citep{Simon1955Behavioral}, subsequent work in cognitive science and information-theoretic decision-making models judgment as resource-limited computation with explicit costs for processing and representation \citep{lewis_computational_2014, ho_cognitive_2022, gottwald_bounded_2019, ortega_information_2011, zenon_information-theoretic_2019, ho_efficiency_2020}. Empirically, people construct simplified task representations and plan under constrained internal state \citep{ho_people_2022}. Rate–distortion accounts capture these bottlenecks in perception and communication \citep{sims_ratedistortion_2016, zaslavsky-etal-2021-rate} and connect to RL and Bayesian decision-making to yield capacity-limited agents \citep{arumugam_bayesian_2023, arumugam_deciding_2022, pmlr-v139-arumugam21a, arumugam_value_2021}; policy compression frames action selection as an information bottleneck \citep{Lai2021}. These lines motivate modeling the feedback pipeline as $U\!\to\!H\!\to\!Y$ with a cognitive-capacity term $C_{\mathrm{cog}\mid S}$ that often forms the binding bottleneck and upper-bounds $I(U;Y\mid S)$ in our analysis.

\textbf{Information Theory in Machine Learning}
The Information Bottleneck program studies compression of irrelevant bits while preserving task-relevant information \citep{tishby_information_2000, tishby_deep_2015, alemi_deep_2017, peng_variational_2020, Kolchinsky_2019}, with continued debate about what “compression’’ measures in deterministic networks \citep{saxe_information_2019, pmlr-v97-goldfeld19a, shwartz-ziv_compress_2023, shwartzziv2017openingblackboxdeep}. Information-theoretic bounds relate hierarchical processing to generalization \citep{kawaguchi_how_2023, he_information-theoretic_2025, NIPS2017_b22b257a} and analyze self-supervised objectives \citep{shwartz-ziv_information-theoretic_2024}. Our use of packings and Fano to build a data-size independent wall follows classical converse techniques and rate–distortion thinking \citep{shannon_coding_1959}, and we interpret residual information learned beyond the true value as channel overfitting \citep{ngampruetikorn_information_2022}. The compression view is also relevant because recent work shows that LLMs are strong general-purpose compressors \citep{deletang_language_2024}. These threads link information budgets and performance. Departing from IB’s focus on compressing internal representations, we instead parameterize alignment by the context-conditioned capacity of the human–AI channel ($\bar C_{\mathrm{tot}\mid S}$) and use it to couple a codebook–Fano wall with a PAC–Bayes ceiling.

\textbf{PAC--Bayes and Mutual Information}
PAC--Bayes provides non-asymptotic generalization guarantees that can remain informative at scale \citep{lotfi_non-vacuous_2024, rodriguez-galvez_more_2024, wu_recursive_2025, leblanc_generalization_2025, picard-weibel_how_2025, dziugaite2017computingnonvacuousgeneralizationbounds, neyshabur2018a, NIPS2001_98c72428, NIPS2017_10ce03a1}. A key development ties the KL term to mutual information between data and parameters \citep{xu_information-theoretic_2017, russo_how_2019}, and the PAC--Bayes Information Bottleneck makes this connection algorithmic by directly regularizing $I(\mathcal D;\theta)$ \citep{wang_pac-bayes_2022}. We ground the abstract KL complexity in a physical constraint of the learning environment: the finite human-feedback capacity $\bar C_{\mathrm{tot}\mid S}$, often dominated by $C_{\mathrm{cog}\mid S}$ in our $U\!\to\!H\!\to\!Y$ model. This yields a capacity-coupled interval in which the same capacity term both limits $I(U;Y\mid S)$ in the Fano floor and controls attainable KL complexity in the PAC--Bayes ceiling. Related mitigations, such as information-bottleneck style reward modeling and behavior-supported methods \citep{miao_inform_2024, dai_mitigating_2025}, and upper-bound–style results \citep{mroueh2025information} are compatible with this view by reallocating or constraining where the limited information budget is spent. By externalizing the KL complexity into the environmental budget $m\,\bar C_{\mathrm{tot}\mid S}+m\,I(U;S)+\rho$ induced by the $U\!\to\!H\!\to\!Y$ channel, our bound aligns the PAC–Bayes term with the same capacity that limits $I(U;Y\mid S)$, a linkage not provided by PAC–Bayes-IB.

\section{Problem Setup}
\label{sec:setup}

To rigorously analyze the Alignment Bottleneck, we model alignment as resource-constrained inference and communication. Following bounded and computational rationality in cognitive science \citep{lewis_computational_2014, ortega_information_2011}, we treat the human feedback provider as a two-stage communication channel, and use channel capacity to quantify the bottleneck. This connects our formulation to the Information Bottleneck and rate–distortion viewpoints \citep{tishby_information_2000, shannon_coding_1959} and to cognitive accounts that frame judgment/externalization as utility–information trade-offs \citep{sims_ratedistortion_2016, zaslavsky-etal-2021-rate}. We next formalize the task, the two-stage channel, and the corresponding capacities.

\subsection{Task, Loss, and Feedback Channel}
\label{subsec:task-loss-channel}

\begin{definition}[Task and Observable Loss]
\label{def:loss}
Let $S$ denote publicly observable context, $U$ the latent task target (``what humans truly want''), and $Y$ the human feedback emitted through a finite-capacity channel. 
A learner outputs an action $\hat a=\pi(Y,S)\in\mathcal A$ using a decoder $\pi$. 
The task loss is a \emph{bounded} measurable function $\ell:\mathcal U\times\mathcal A\to[0,1]$ (such as 0--1 loss, pairwise ranking loss mapped to $[0,1]$, or a truncated and normalized MSE; see Appx.~\ref{app:truncation}).
The (population) risk is
\begin{equation}
\label{eq:risk}
R(\pi)\triangleq \mathbb E\big[\ell(U,\pi(Y,S))\big].
\end{equation}
\end{definition}

\begin{definition}[Human Channel Families]
\label{def:families}
We model the human-in-the-loop communication by a cascade $U\!\to\!H\!\to\!Y$ given $S$. 
The \emph{cognitive stage} uses a conditional kernel $p(h\mid u,s)\in \mathcal F_{\mathrm{cog}}$, and the \emph{articulation stage} uses $p(y\mid h,s)\in \mathcal F_{\mathrm{art}}$. 
The learner observes only $(Y,S)$, not $H$.
\end{definition}

This two-stage cascade $U\!\to\!H\!\to\!Y$ treats the human as a finite-capacity communication channel. Evidence from cognitive science shows that human judgment is resource-bounded and therefore compressive rather than perfect retrieval \citep{lewis_computational_2014, zenon_information-theoretic_2019, ortega_information_2011, gottwald_bounded_2019}; people construct task-specific construals that trade representational complexity for utility \citep{ho_people_2022}, which provides a concrete mechanism for the $U\!\to\!H$ bottleneck and aligns with rate–distortion views of perception and communication \citep{sims_ratedistortion_2016, zaslavsky-etal-2021-rate}. Beyond description, bounded-rationality formalisms operationalize these limits: policy selection itself can be cast as an information bottleneck \citep{Lai2021}, and rate–distortion–constrained learning appears in bandits and then full RL, culminating in a common Bayesian/RL view of capacity-limited behavior \citep{pmlr-v139-arumugam21a, arumugam_value_2021, arumugam_deciding_2022, arumugam_bayesian_2023}. We adopt this source–channel lens and treat the finite cognitive capacity $C_{\mathrm{cog}\mid S}$ as an often binding bottleneck through which value information must pass.

\begin{assumption}[Per-stage Feasibility]
\label{assump:per-stage}
All admissible systems considered in this paper satisfy $p(h\mid u,s)\in\mathcal F_{\mathrm{cog}}$ and $p(y\mid h,s)\in\mathcal F_{\mathrm{art}}$ for almost every $(u,s)$, with the two stages independent across i.i.d.\ samples.
\end{assumption}

\subsection{Context-Conditional Capacities}
\label{subsec:capacities}

\begin{definition}[Cognitive Capacity]
\label{def:C_cog}
For each $s$, define
\begin{equation}
\label{eq:def_C_cog}
C_{\mathrm{cog}\mid S}(s)\ \triangleq\ \sup_{p(h\mid u,s)\in\mathcal F_{\mathrm{cog}}}\ I(U;H\mid S=s)\,.
\end{equation}
\end{definition}

\begin{definition}[Articulation Capacity]
\label{def:C_art}
For each $s$, define
\begin{equation}
\label{eq:def_C_art}
C_{\mathrm{art}\mid S}(s)\ \triangleq\ \sup_{p(y\mid h,s)\in\mathcal F_{\mathrm{art}}}\ I(H;Y\mid S=s)\,.
\end{equation}
\end{definition}

\begin{definition}[Total Capacity and Its Average]
\label{def:C_total}
For each $s$, define the per-context total capacity
\begin{equation}
\label{eq:def_C_tot_pointwise}
C_{\mathrm{tot}\mid S}(s)\ \triangleq\ \min\Big\{C_{\mathrm{cog}\mid S}(s),\ C_{\mathrm{art}\mid S}(s)\Big\},
\end{equation}
and its average
\begin{equation}
\label{eq:def_C_tot_avg}
\bar C_{\mathrm{tot}\mid S}\ \triangleq\ \mathbb E_{S}\big[C_{\mathrm{tot}\mid S}(S)\big].
\end{equation}
\end{definition}

\begin{proposition}[Cascade Upper Bound via Data Processing]
\label{prop:cascade}
Under Assumption~\ref{assump:per-stage}, any admissible cascade $U\!\to\!H\!\to\!Y$ forms a Markov chain. By the data processing inequality \citep{shannon_mathematical_1948}, it satisfies for every $s$,
\begin{equation}
\label{eq:cascade_pointwise}
I(U;Y\mid S=s)\ \le\ \min\{I(U;H\mid S=s),\ I(H;Y\mid S=s)\}\ \le\ C_{\mathrm{tot}\mid S}(s),
\end{equation}
and hence, averaging over $S$,
\begin{equation}
\label{eq:cascade_avg}
I(U;Y\mid S)\ \le\ \bar C_{\mathrm{tot}\mid S}.
\end{equation}
\end{proposition}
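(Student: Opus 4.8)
The plan is to verify the Markov structure, invoke the data processing inequality conditionally on $S$, and then bound the two resulting terms by the per-stage capacities before averaging. First I would fix a context value $s$ and observe that, by Definition~\ref{def:families}, the articulation stage is specified by a kernel $p(y\mid h,s)$ with no dependence on $u$; equivalently $Y$ is conditionally independent of $U$ given $(H,S)$. Hence, conditioned on the event $S=s$, the triple forms a Markov chain $U\to H\to Y$, and by the symmetry of conditional independence the reversed chain $Y\to H\to U$ is Markov as well.

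Second, I would apply the data processing inequality to this conditional chain, working throughout with the law $\Pr(\cdot\mid S=s)$ so that every mutual information is exactly the one named in the statement. The DPI applied to $U\to H\to Y$ gives $I(U;Y\mid S=s)\le I(U;H\mid S=s)$, and applied to $Y\to H\to U$ gives $I(U;Y\mid S=s)\le I(H;Y\mid S=s)$; taking the smaller of the two yields $I(U;Y\mid S=s)\le\min\{I(U;H\mid S=s),\ I(H;Y\mid S=s)\}$.

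Third, I would bound each term by its capacity. Under Assumption~\ref{assump:per-stage} the cognitive kernel of the cascade lies in $\mathcal F_{\mathrm{cog}}$, so $I(U;H\mid S=s)$ is one of the quantities over which Definition~\ref{def:C_cog} takes a supremum, whence $I(U;H\mid S=s)\le C_{\mathrm{cog}\mid S}(s)$; here it matters that this mutual information is computed with the true input law $p(u\mid s)$, precisely as the definition fixes it. Likewise the articulation kernel lies in $\mathcal F_{\mathrm{art}}$ and its input marginal is the cognitively induced $p(h\mid s)$, so $I(H;Y\mid S=s)\le C_{\mathrm{art}\mid S}(s)$ by Definition~\ref{def:C_art}. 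Combining with Definition~\ref{def:C_total} gives $I(U;Y\mid S=s)\le C_{\mathrm{tot}\mid S}(s)$, which is \eqref{eq:cascade_pointwise}. Finally, since $I(U;Y\mid S)=\mathbb E_S\big[I(U;Y\mid S=s)\big]$ by the definition of conditional mutual information, integrating the pointwise bound against the law of $S$ (noting in passing that $s\mapsto C_{\mathrm{tot}\mid S}(s)$ is measurable) yields $I(U;Y\mid S)\le\mathbb E_S\big[C_{\mathrm{tot}\mid S}(S)\big]=\bar C_{\mathrm{tot}\mid S}$, which is \eqref{eq:cascade_avg}.

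I do not expect a genuine obstacle: the argument is a conditional instance of a classical converse chain. The only point needing care is the bookkeeping in the third step — making sure the mutual informations being bounded are evaluated against exactly the input distributions that the capacity definitions hold fixed (the true $p(u\mid s)$ for the cognitive stage, and the cognitively induced $p(h\mid s)$ for the articulation stage), so that the ``channel-only'' suprema in Definitions~\ref{def:C_cog}--\ref{def:C_art} legitimately dominate them, and the minor measurability remark needed to pass from the pointwise bound to its average.
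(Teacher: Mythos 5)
Your proof is correct and follows essentially the same route as the paper's (Appendix~B): establish the conditional Markov chain, apply the data processing inequality given $S=s$ to both directions, dominate each term by the corresponding per-stage supremum, and average over $S$. Your extra care about which input marginals the capacity definitions hold fixed is a worthwhile clarification consistent with the paper's Remark~\ref{rem:min_feasible}, but it does not change the argument.
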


\begin{remark}[Source-dependent min and achievability]
\label{rem:min_feasible}
Definitions~\ref{def:C_cog}--\ref{def:C_total} impose per-stage feasibility (Assumption~\ref{assump:per-stage}), so \eqref{eq:cascade_pointwise} follows from data processing without additional compatibility assumptions. The conditional quantities $C_{\mathrm{cog}\mid S}(s)$ and $C_{\mathrm{art}\mid S}(s)$ are evaluated under the given source $P(U\mid S=s)$ rather than maximized over inputs; they are therefore analogous to rate–distortion quantities computed for a fixed source \citep{shannon_coding_1959} and depend on $P(U\mid S)$. Finally, $\min\{\sup I(U;H\mid S=s),\,\sup I(H;Y\mid S=s)\}$ is in general a conservative upper bound for the cascade because the per-stage optimizers need not be mutually compatible, so equality (achievability) should not be expected.
\end{remark}

All statements also hold if one replaces the context $S$ by any measurable coarsening $S'=T(S)$: then the interference term $I(U;S)$ is replaced by $I(U;S')\le I(U;S)$ and capacities are recomputed as $\bar C_{\mathrm{tot}\mid S'}$. See Appendix~\ref{app:coarsen}.

\section{Information-Theoretic Lower Bounds}
\label{sec:unified_lower}

Having established our problem model, we derive the first component of the Alignment Performance Interval: an information-theoretic lower bound on the true risk. The bound exposes how task difficulty scales with value complexity and channel capacity. Following the classic minimax methodology from statistical decision theory and information theory \citep{shannon_mathematical_1948}, we construct a family of hard but distinguishable tasks and apply Fano's inequality to show that any algorithm incurs nontrivial error in telling them apart. In our setting, this family is a ``$\Delta$-separable codebook'' of value–action pairs.

\subsection{Separable Codebooks and the Loss--Index Link}
\label{subsec:codebook}

\begin{definition}[$\Delta$-Separable Codebook]
\label{def:separable_codebook}
A collection $\{(u^{(i)},a^{(i)})\}_{i=1}^M\subset\mathcal U\times\mathcal A$ is called a \emph{$\Delta$-separable codebook} for loss $\ell\in[0,1]$ if
\begin{align}
\label{eq:codebook_good}
\ell\big(u^{(i)},a^{(i)}\big) &\le \varepsilon\quad\text{for all }i,
\\
\label{eq:codebook_margin}
\ell\big(u^{(j)},a^{(i)}\big) &\ge \varepsilon+\Delta\quad\text{for all }j\neq i,
\end{align}
for some $\varepsilon\in[0,1-\Delta]$. We write $\mathcal C(M,\Delta,\varepsilon)$ for the set of such codebooks.
\end{definition}

\begin{remark}[How to Build $\mathcal C(M,\Delta,\varepsilon)$ in Practice]
\label{rem:build_codebook}
The construction of such codebooks (packings) is standard for minimax lower bounds in statistical decision theory and information theory \citep{shannon_mathematical_1948}. For 0--1 classification, choose $a^{(i)}$ predicting class $i$, giving $\varepsilon=0$ and $\Delta=1$. For pairwise ranking with 0--1 \emph{pairwise} loss averaged over all $\binom{n}{2}$ pairs, choose $a^{(i)}$ realizing ranking $i$, so $\varepsilon=0$ and any two total orders differ on at least one pair, yielding $\Delta=1/{\binom{n}{2}}$ after normalization to $[0,1]$. For truncated MSE $\ell(u,a)=\min\{\|u-a\|^2/\tau^2,1\}$, take $a^{(i)}=u^{(i)}$ on an $r$-separated packing of $\mathcal U$, then $\varepsilon=0$ and it is consistent with Assump.~\ref{assump:loss_index_link} to use the common margin $\Delta=r^2/(4\tau^2)$ (the prototype cross-loss is $\ge r^2/\tau^2\ge\Delta$, while the Voronoi misclassification loss is $\ge r^2/(4\tau^2)$). See Appx.~\ref{app:packing_constructions}.
\end{remark}

\begin{assumption}[Loss--Index Link via a Measurable Partition]
\label{assump:loss_index_link}
Given a $\Delta$-separable codebook $\{(u^{(i)},a^{(i)})\}_{i=1}^M$, there exists a measurable map $\phi:\mathcal A\times\mathcal S\to [M]$ (an ``index decoder'') such that for all $i$, all $s\in\mathcal S$, and all $a\in\mathcal A$,
\begin{equation}
\label{eq:loss_index_link}
\phi(a,s)\neq i\ \ \Longrightarrow\ \ \ell\big(u^{(i)},a\big)\ \ge\ \varepsilon+\Delta.
\end{equation}
\end{assumption}

\begin{remark}[When Assumption~\ref{assump:loss_index_link} Holds]
\label{rem:assump_link_holds}
For 0--1 classification and pairwise ranking, let $\phi$ return the predicted class/ranking (allowing dependence on $S$ if needed); then \eqref{eq:loss_index_link} holds immediately. For truncated MSE, let $\phi$ be the nearest-prototype Voronoi partition under $\|\cdot\|$ (prototypes may depend on $S$). With an $r$-separated packing, any misclassification implies $\|a-u^{(i)}\|\ge r/2$, hence $\ell(u^{(i)},a)\ge r^2/(4\tau^2)$, so \eqref{eq:loss_index_link} holds with the common choice $\Delta=r^2/(4\tau^2)$; equivalently, the separation condition is $r\ge 2\tau\sqrt{\Delta}$. See Appx.~\ref{app:packing_constructions}.
\end{remark}

A soft high-probability variant of the loss--index link that yields a correspondingly slackened converse is provided in Appendix~\ref{app:soft_link}.

\subsection{Fano--Packing Converse Lower Bound}
\label{subsec:unified_fano}

\begin{lemma}[Risk $\Rightarrow$ Index Error]
\label{lem:risk_to_error}
Under Assumption~\ref{assump:loss_index_link}, for any decoder $\pi$ and $\phi$ as in \eqref{eq:loss_index_link}, define $\hat J\triangleq \phi(\pi(Y,S),S)$. 
If $J$ is uniform over $[M]$ and $U=U^{(J)}$ is the codebook target (measurable in $(J,S)$), then
\begin{equation}
\label{eq:risk_ge_delta_error}
\mathbb E\big[\ell(U,\pi(Y,S))\big]\ \ge\ (\varepsilon+\Delta)\,\mathbb P\{\hat J\neq J\}.
\end{equation}
\end{lemma}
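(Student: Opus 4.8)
The plan is to prove the inequality pointwise on the sample space and then take expectations, using nothing beyond monotonicity of expectation; there is no need for Fano or any other probabilistic tool here, since this lemma is purely the ``risk dominates index error'' reduction.

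First I would fix a realization and split on the event $\{\hat J\neq J\}$. On the complement $\{\hat J=J\}$ there is nothing to show, because $\ell$ takes values in $[0,1]$ and hence $\ell(U,\pi(Y,S))\ge 0=(\varepsilon+\Delta)\cdot 0$. On the event $\{\hat J\neq J\}$ I would instantiate the loss--index link \eqref{eq:loss_index_link} with index $i=J$, context $s=S$, and action $a=\pi(Y,S)$: since $\hat J=\phi(\pi(Y,S),S)\neq J$, the stated implication yields $\ell\big(u^{(J)},\pi(Y,S)\big)\ge\varepsilon+\Delta$, and because $U=U^{(J)}$ by hypothesis this is exactly $\ell(U,\pi(Y,S))\ge\varepsilon+\Delta$. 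This substitution is legitimate for every realization precisely because \eqref{eq:loss_index_link} is quantified over \emph{all} $i\in[M]$, $s\in\mathcal S$, and $a\in\mathcal A$, so we may plug in the random values $J(\omega)$, $S(\omega)$, and $\pi(Y(\omega),S(\omega))$.

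Combining the two cases gives the almost-sure bound $\ell(U,\pi(Y,S))\ \ge\ (\varepsilon+\Delta)\,\mathbf{1}\{\hat J\neq J\}$. Taking expectations over the joint law of $(U,Y,S,J)$ and using $\mathbb E[\mathbf{1}\{\hat J\neq J\}]=\mathbb P\{\hat J\neq J\}$ yields \eqref{eq:risk_ge_delta_error}. A short measurability remark closes the argument: $\hat J=\phi(\pi(Y,S),S)$ is a well-defined $[M]$-valued random variable since $\pi$ and $\phi$ are measurable (Def.~\ref{def:loss}, Assumption~\ref{assump:loss_index_link}), $U=U^{(J)}$ is measurable in $(J,S)$ by hypothesis, so $\ell(U,\pi(Y,S))$ is a bounded measurable random variable and $\{\hat J\neq J\}$ is an event.

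I do not expect a genuine obstacle. The only point needing care is that the substitution into \eqref{eq:loss_index_link} must be made with $i$ equal to the \emph{true} index $J$, not the decoded $\hat J$; this is what makes a decoding error force a loss of at least $\varepsilon+\Delta$, and hence what produces the factor $(\varepsilon+\Delta)$ rather than just $\Delta$. I would also note that uniformity of $J$ is not used in this lemma: it is recorded in the statement only because it is needed downstream, when Fano's inequality is later applied to $\mathbb P\{\hat J\neq J\}$.
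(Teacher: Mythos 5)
Your proof is correct and matches the paper's argument exactly: both establish the pointwise bound $\ell(U,\pi(Y,S))\ge(\varepsilon+\Delta)\,\mathbf 1\{\hat J\neq J\}$ by instantiating the loss--index link at $i=J$, $s=S$, $a=\pi(Y,S)$, then take expectations. Your added observations (that uniformity of $J$ is unused here and that one must plug in the true index $J$ rather than $\hat J$) are accurate but not needed beyond what the paper already does.
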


\begin{proof}
By Assumption~\ref{assump:loss_index_link}, for every $i$, $s$, $a$,
$\phi(a,s)\neq i \Rightarrow \ell(u^{(i)},a)\ge \varepsilon+\Delta$.
Instantiate $i=J$, $a=\pi(Y,S)$, $s=S$ and note $U=U^{(J)}$ to obtain the pointwise bound
\[
\ell\big(U,\pi(Y,S)\big)\ \ge\ (\varepsilon+\Delta)\,\mathbf 1\{\hat J\neq J\}\qquad \text{a.s.}
\]
Since $\ell\in[0,1]$, all terms are integrable and $\pi,\phi$ are measurable by assumption; taking expectations yields \eqref{eq:risk_ge_delta_error}.
\end{proof}

\begin{lemma}[Information Reduction: $J\to U\to Y$]
\label{lem:J_to_U_to_Y}
With $U=U^{(J)}$ measurable in $(J,S)$, we have the Markov chain $J\to U\to Y$ given $S$, and hence
\begin{equation}
\label{eq:I_JY_le_I_UY}
I(J;Y\mid S)\ \le\ I(U;Y\mid S).
\end{equation}
\end{lemma}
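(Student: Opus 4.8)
The plan is to first establish the claimed conditional Markov chain $J\to U\to Y$ given $S$ directly from the structure of the feedback channel, and then convert it into the mutual-information inequality via the conditional data-processing inequality.

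First I would record the measurability facts. By hypothesis $U=U^{(J)}$ with $U^{(J)}$ measurable in $(J,S)$, so for a.e.\ $S=s$ there is a deterministic map $j\mapsto u^{(j)}(s)$ with $U=u^{(J)}(s)$ almost surely; in particular $\sigma(U)\subseteq\sigma(J,S)$, so $U$ carries no information beyond $(J,S)$. Next I would argue the conditional independence $Y\perp J\mid (U,S)$. The feedback $Y$ is produced by the cascade of Definition~\ref{def:families}: sample $H\sim p(\cdot\mid U,S)$ with $p(h\mid u,s)\in\mathcal F_{\mathrm{cog}}$, then $Y\sim p(\cdot\mid H,S)$ with $p(y\mid h,s)\in\mathcal F_{\mathrm{art}}$. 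Composing the two stages, the conditional law of $Y$ given everything upstream is $p(y\mid u,s)=\int p(y\mid h,s)\,p(h\mid u,s)\,dh$, a kernel whose only arguments are $(u,s)$. Since the index $J$ enters the joint law only through $U=u^{(J)}(S)$ and appears in neither stage's kernel, conditioning additionally on $J$ leaves this law unchanged: $p(y\mid j,u,s)=p(y\mid u,s)$ for a.e.\ $(j,u,s)$. This is exactly the statement that $J\to U\to Y$ is a Markov chain given $S$.

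Finally I would derive the inequality from the chain. Expanding $I(J,U;Y\mid S)$ with the chain rule in the two possible orders gives
\[
I(U;Y\mid S)+I(J;Y\mid U,S)\;=\;I(J;Y\mid S)+I(U;Y\mid J,S).
\]
The Markov property kills the second term on the left, $I(J;Y\mid U,S)=0$, and conditional mutual information is nonnegative, so $I(U;Y\mid J,S)\ge 0$; rearranging yields $I(J;Y\mid S)\le I(U;Y\mid S)$, which is \eqref{eq:I_JY_le_I_UY}.

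I expect the only substantive step to be the second one: making precise that both channel stages are functions of $(\cdot,S)$ alone, so $Y$ cannot depend on $J$ once $U$ and $S$ are fixed, together with the mild measure-theoretic care needed when several codebook indices map to the same value $u^{(j)}(s)$ (harmless, since the law of $Y$ given $(U,S)$ never consults $J$). The chain-rule/DPI manipulation is routine, and the uniformity of $J$ assumed in Lemma~\ref{lem:risk_to_error} plays no role here.
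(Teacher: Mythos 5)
Your proof is correct and follows the route the paper intends: the paper states Lemma~\ref{lem:J_to_U_to_Y} without an explicit proof, treating it as an instance of the conditional data-processing inequality for $J\to U\to Y$ given $S$, and your chain-rule expansion of $I(J,U;Y\mid S)$ is exactly the standard derivation of that inequality. Your care in verifying the Markov property from the channel kernels (that $p(y\mid j,u,s)=p(y\mid u,s)$ because $J$ enters the generative process only through $U=U^{(J)}$) supplies the one detail the paper leaves implicit, and your parenthetical observation that $U$ being deterministic in $(J,S)$ makes the inequality an equality here is also accurate, though unneeded.
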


\begin{theorem}[Fano--Packing Lower Bound (Bayes/Minimax Semantics)]
\label{thm:unified_fano}
Let $\ell \in [0,1]$ and suppose there exists a $\Delta$-separable codebook of size $M$ satisfying Assumption~\ref{assump:loss_index_link}. 
Let $J\sim\mathrm{Unif}[M]$, and define the mixture distribution over $(U,S)$ by setting $U=U^{(J)}$, with $U^{(J)}$ measurable in $(J,S)$, and $S\sim P(S)$. 
Assume $M\ge 2$. 
Write $R_{\mathrm{mix}}(\pi)$ for the risk under this mixture distribution. 
Then, for any decoder $\pi$,

Then for any decoder $\pi$,
\begin{equation}
\label{eq:unified_lower}
R_{\mathrm{mix}}(\pi)\ \ge\ (\varepsilon+\Delta)\,\Big(1-\frac{I(U;Y\mid S)+\log 2}{\log M}\Big)_+.
\end{equation}
In particular, using \eqref{eq:cascade_avg},
\begin{equation}
\label{eq:unified_lower_capacity}
R_{\mathrm{mix}}(\pi)\ \ge\ (\varepsilon+\Delta)\,\Big(1-\frac{\bar C_{\mathrm{tot}\mid S}^{\mathrm{mix}}+\log 2}{\log M}\Big)_+.
\end{equation}
Equivalently, these yield a standard minimax lower bound over the family of sources supported on the codebook.
\end{theorem}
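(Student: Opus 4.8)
The plan is to bolt together the two preceding lemmas with a conditional form of Fano's inequality, then pull in Proposition~\ref{prop:cascade}. The skeleton is: Lemma~\ref{lem:risk_to_error} converts the risk into an index-recovery error probability $P_e \triangleq \mathbb{P}\{\hat J\neq J\}$; Fano turns $P_e$ into a bound involving $I(J;Y\mid S)$; Lemma~\ref{lem:J_to_U_to_Y} replaces $I(J;Y\mid S)$ by $I(U;Y\mid S)$; and finally \eqref{eq:cascade_avg} replaces $I(U;Y\mid S)$ by $\bar C_{\mathrm{tot}\mid S}^{\mathrm{mix}}$.

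For the core Fano step I would argue conditionally on $S=s$. Since $\hat J=\phi(\pi(Y,S),S)$ is, for fixed $s$, a measurable function of $Y$, the chain $J\to Y\to\hat J$ is Markov given $S=s$, so $H(J\mid Y,S=s)\le H(J\mid\hat J,S=s)\le h_b(P_e(s))+P_e(s)\log(M-1)$ with $P_e(s)\triangleq\mathbb{P}\{\hat J\neq J\mid S=s\}$; bounding $h_b(\cdot)\le\log 2$ and $\log(M-1)\le\log M$ and averaging over $S$ gives $H(J\mid Y,S)\le\log 2+P_e\log M$. Because $J\sim\mathrm{Unif}[M]$ is drawn independently of $S$ (the reading intended by ``$J\sim\mathrm{Unif}[M]$, $S\sim P(S)$, $U=U^{(J)}$ measurable in $(J,S)$''), we also have $H(J\mid S)=\log M$, hence $I(J;Y\mid S)=\log M-H(J\mid Y,S)$. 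Combining the two displays yields $\log M-I(J;Y\mid S)\le\log 2+P_e\log M$, i.e. $P_e\ge 1-\frac{I(J;Y\mid S)+\log 2}{\log M}$; since $P_e\ge0$ we may insert the positive part.

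Now $I(J;Y\mid S)\le I(U;Y\mid S)$ by Lemma~\ref{lem:J_to_U_to_Y}, and $t\mapsto(1-\tfrac{t+\log 2}{\log M})_+$ is nonincreasing, so $P_e\ge\big(1-\frac{I(U;Y\mid S)+\log 2}{\log M}\big)_+$. Feeding this into the inequality $R_{\mathrm{mix}}(\pi)\ge(\varepsilon+\Delta)P_e$ of Lemma~\ref{lem:risk_to_error} gives \eqref{eq:unified_lower}; applying the same monotonicity together with $I(U;Y\mid S)\le\bar C_{\mathrm{tot}\mid S}^{\mathrm{mix}}$ from \eqref{eq:cascade_avg} (with capacities evaluated under the mixture source) gives \eqref{eq:unified_lower_capacity}. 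The minimax reading is then immediate: $R_{\mathrm{mix}}$ is the Bayes risk under the uniform prior on the finite family of codebook sources, so $\sup_{\text{family}}R(\pi)\ge R_{\mathrm{mix}}(\pi)$ and the lower bound transfers.

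The only genuinely delicate point is the bookkeeping around conditioning on $S$: one must use the independence of $J$ and $S$ so that $H(J\mid S)=\log M$ exactly and the averaged conditional Fano bound collapses without an $I(J;S)$ remainder; if the construction only guaranteed $J$ uniform but not independent, the statement would need a correction term. Everything else---measurability of $\hat J$ (inherited from $\pi,\phi$), integrability (all quantities bounded), and the positive-part truncation---is routine.
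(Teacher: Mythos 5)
Your proposal is correct and follows essentially the same route as the paper: Lemma~\ref{lem:risk_to_error} to pass from risk to index error, conditional Fano with $H(J\mid S)=\log M$ from the independence of $J$ and $S$, Lemma~\ref{lem:J_to_U_to_Y} to replace $I(J;Y\mid S)$ by $I(U;Y\mid S)$, and then \eqref{eq:cascade_avg}. Your expanded Fano step (bounding $h_b\le\log 2$, $\log(M-1)\le\log M$, averaging over $S$) matches the paper's Appendix~\ref{app:fano} derivation, and your remark on the $J\perp S$ bookkeeping correctly identifies the one point the paper also flags.
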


\begin{proof}
Let $J$ be uniform on $[M]$, $U=U^{(J)}$. 
Since $J$ is independent of $S$, we have $H(J\mid S)=\log M$.
By Fano's inequality conditioned on $S$ \citep{shannon_mathematical_1948},
\[
\mathbb P\{\hat J\neq J\}\ \ge\ 1-\frac{I(J;Y\mid S)+\log 2}{\log M}.
\]
Combine with Lemma~\ref{lem:risk_to_error} and Lemma~\ref{lem:J_to_U_to_Y} to get \eqref{eq:unified_lower}. 
Then apply \eqref{eq:cascade_avg}.
\end{proof}

\subsection{Capacity--Limited Achievability and the Information Wall}
\label{subsec:wall}
Define the information wall for a problem class $\mathcal P$ (set of admissible codebooks) by
\begin{equation}
\label{eq:wall_def}
\mathsf{Wall}\big(\bar C_{\mathrm{tot}\mid S}^{\mathrm{mix}};\mathcal P\big)\ \triangleq\ \sup_{(M,\Delta,\varepsilon):\ \mathcal C(M,\Delta,\varepsilon)\in\mathcal P}
(\varepsilon+\Delta)\,\Big(1-\frac{\bar C_{\mathrm{tot}\mid S}^{\mathrm{mix}}+\log 2}{\log M}\Big)_+.
\end{equation}
Here $\bar C_{\mathrm{tot}\mid S}^{\mathrm{mix}}$ is evaluated under the codebook-induced mixture distribution used in Theorem~\ref{thm:unified_fano}.

Then, for every decoder $\pi$,
\[
\sup_{\mathcal C\in \mathcal P} R_{\mathrm{mix}}^{\mathcal C}(\pi) \ge \mathsf{Wall}(\bar C_{\mathrm{tot}\mid S}^{\mathrm{mix}};\mathcal P)
\]
where $R_{\mathrm{mix}}^{(M,\Delta,\varepsilon)}(\pi)$ denotes the risk under the mixture induced by the chosen codebook (Bayes/minimax semantics from Thm.~\ref{thm:unified_fano}). 
Equivalently, this yields the standard minimax lower bound
\[
\inf_\pi \sup_{\mathcal C\in\mathcal P} R_{\mathrm{mix}}^{\mathcal C}(\pi) \ge \mathsf{Wall}(\bar C_{\mathrm{tot}\mid S}^{\mathrm{mix}};\mathcal P).
\]
This replaces log-loss/posterior-entropy converses and is invariant to reparameterizations.

\section{PAC--Bayes Upper Bounds}
\label{sec:upper}

With the floor in place, we now derive the upper bound. We use PAC--Bayes, a non-asymptotic framework suited to overparameterized learners (including LLMs) with non-vacuous guarantees at scale \citep{lotfi_non-vacuous_2024}. Our aim is not to introduce a new bound but to make its complexity term, $\mathrm{KL}(P\|Q)$, capacity-explicit in the same $\bar C_{\mathrm{tot}\mid S}$ that drives the converse, closing the loop between the lower wall and the statistical ceiling.

\subsection{PAC--Bayes Bounds}
\label{subsec:pacbayes}

We recall a standard PAC--Bayes result for bounded losses as the basis of the ceiling. Recent variants tighten constants, cover heavier tails, and allow anytime validity, yielding non-vacuous bounds even for billion-parameter LLMs \citep{dziugaite2017computingnonvacuousgeneralizationbounds, rodriguez-galvez_more_2024, lotfi_non-vacuous_2024, wu_recursive_2025}. Tightness is not automatic: strong guarantees require priors that put sufficient mass on high-performing predictors \citep{picard-weibel_how_2025}. This interacts with the Alignment Bottleneck: finite human-feedback capacity limits how informative data-independent priors can be, and this constraint enters through the KL term that we bound via $\bar C_{\mathrm{tot}\mid S}$.

\begin{theorem}[PAC--Bayes for Observable Loss]
\label{thm:pacbayes_basic}
Let $\tilde\ell:\mathcal Y\times\mathcal S\times\mathcal A\to[0,1]$ be any bounded loss measurable with respect to the observed data $(Y,S)$.
For any $\delta\in(0,1)$, with probability at least $1-\delta$ over the i.i.d.\ draw of the dataset $\mathcal D=\{(Y_i,S_i)\}_{i=1}^m$,
\begin{equation}
\label{eq:pacbayes_basic}
\mathbb E_{\theta\sim P}\big[R_{\mathrm{obs}}(\theta)\big]\ \le\ 
\mathbb E_{\theta\sim P}\big[\widehat R^{\mathrm{obs}}_m(\theta)\big]\ +\ 
\sqrt{\frac{\mathrm{KL}(P\|Q)+\log(1/\delta)}{2m}},
\end{equation}
where
\[
R_{\mathrm{obs}}(\theta)\ \triangleq\ \mathbb E\big[\tilde\ell\big(Y,S,\pi_\theta(Y,S)\big)\big],\qquad
\widehat R^{\mathrm{obs}}_m(\theta)\ \triangleq\ \frac{1}{m}\sum_{i=1}^m \tilde\ell\big(Y_i,S_i,\pi_\theta(Y_i,S_i)\big).
\]
\end{theorem}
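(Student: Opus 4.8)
The plan is to prove Theorem~\ref{thm:pacbayes_basic} by a direct application of the classical McAllester-type PAC--Bayes bound, treating the observable loss $\tilde\ell(Y,S,\pi_\theta(Y,S))$ as the per-example loss of the ``hypothesis'' $\theta$. First I would fix the hypothesis class to be the parameterized decoders $\{\pi_\theta\}_\theta$, and for each $\theta$ define the per-example loss function $g_\theta(Y,S)\triangleq \tilde\ell(Y,S,\pi_\theta(Y,S))\in[0,1]$; the key observation is that this is a bounded, $[0,1]$-valued, measurable function of the observed datum $(Y,S)$ only, so $R_{\mathrm{obs}}(\theta)=\mathbb E[g_\theta(Y,S)]$ is exactly the population mean of an i.i.d.\ average whose empirical counterpart is $\widehat R^{\mathrm{obs}}_m(\theta)$. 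This is the crucial reduction: because $\tilde\ell$ is defined to be measurable with respect to $(Y,S)$, the unobserved latent $U$ and the cascade structure play no role here, and the problem collapses to a textbook i.i.d.\ learning setup with bounded loss.

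Next I would invoke the standard PAC--Bayes theorem (McAllester's bound, in the $\sqrt{\cdot}$ form): for any data-independent prior $Q$ over $\theta$, any $\delta\in(0,1)$, and any posterior $P$ (possibly data-dependent), with probability at least $1-\delta$ over the i.i.d.\ sample $\mathcal D=\{(Y_i,S_i)\}_{i=1}^m$,
\begin{equation*}
\mathbb E_{\theta\sim P}\big[R_{\mathrm{obs}}(\theta)\big]\ \le\ \mathbb E_{\theta\sim P}\big[\widehat R^{\mathrm{obs}}_m(\theta)\big]\ +\ \sqrt{\frac{\mathrm{KL}(P\|Q)+\log(1/\delta)}{2m}}.
\end{equation*}
The standard proof route, which I would either cite or sketch, is: (i) for each fixed $\theta$, Hoeffding's inequality controls the moment generating function of $R_{\mathrm{obs}}(\theta)-\widehat R^{\mathrm{obs}}_m(\theta)$, giving $\mathbb E_{\mathcal D}\exp\big(2m\lambda(R_{\mathrm{obs}}(\theta)-\widehat R^{\mathrm{obs}}_m(\theta))^2\big)$ or the analogous one-sided bound under control; (ii) take expectation over $\theta\sim Q$ and swap expectations by Fubini; (iii) apply the Donsker--Varadhan / Gibbs variational inequality to change measure from $Q$ to $P$ at the cost of $\mathrm{KL}(P\|Q)$; (iv) apply Markov's inequality and rearrange, using Jensen to pull the square root outside the $P$-expectation. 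Since the paper explicitly says its aim ``is not to introduce a new bound,'' I would keep this to a brief recollection and point to the cited references (e.g.\ \citealp{dziugaite2017computingnonvacuousgeneralizationbounds, rodriguez-galvez_more_2024}) for the detailed constants.

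The only genuine subtlety — and the step I expect to need the most care — is the i.i.d.\ status of the sample and the measurability bookkeeping: I must confirm that $\mathcal D=\{(Y_i,S_i)\}_{i=1}^m$ is drawn i.i.d.\ from the marginal on $(Y,S)$ induced by the source $P(U,S)$ pushed through the cascade $U\to H\to Y$, which follows from Assumption~\ref{assump:per-stage} (two stages independent across i.i.d.\ samples), and that $g_\theta$ is jointly measurable in $(\theta,Y,S)$ so that the Fubini swap in step (ii) is licensed. I would also note explicitly that $Q$ must be chosen independently of $\mathcal D$ (the prior condition), while $P$ may depend on $\mathcal D$ — this is what makes the bound applicable to a trained policy — and that the bound holds for all such $P$ simultaneously, which is the usual strength of PAC--Bayes and is what later lets the paper optimize the right-hand side over $P$. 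No concentration beyond Hoeffding is needed because $\tilde\ell\in[0,1]$; if one wanted the tighter $\mathrm{kl}$-form bound that is standard in the literature, that would be an easy drop-in replacement but is not required for the stated $\sqrt{\cdot}$ version.
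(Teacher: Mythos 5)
Your proposal is correct and matches the paper's approach: the paper does not prove Theorem~\ref{thm:pacbayes_basic} but simply recalls it as the standard McAllester-type bound for bounded losses (citing the PAC--Bayes literature), and your reduction to a textbook i.i.d.\ bounded-loss setting via $g_\theta(Y,S)=\tilde\ell(Y,S,\pi_\theta(Y,S))$ together with the Hoeffding/Donsker--Varadhan/Markov sketch is exactly the intended justification. Your attention to the i.i.d.\ status of $(Y_i,S_i)$ under Assumption~\ref{assump:per-stage} and to the data-independence of the prior $Q$ is the right bookkeeping and is consistent with how the paper uses the result downstream.
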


\noindent\textit{Canonical choice.}
If we choose $\tilde\ell=\tilde\ell^\star$ as in Appendix~\ref{app:bayes_transform}, then $R_{\mathrm{obs}}(\theta)=R(\pi_\theta)$ holds for the same data distribution.

\subsection{KL Decomposition and Capacity Control}
\label{subsec:kl_decomp}

\begin{lemma}[Expected KL Decomposition]
\label{lem:kl_decomp}
Fix a prior $Q$ that is independent of the dataset $\mathcal D$. 
Let $p(\theta)$ be the marginal of $\theta$ and $P(\cdot\mid\mathcal D)$ be the posterior. Then
\begin{equation}
\label{eq:kl_decomp}
\mathbb E_{\mathcal D}\big[\mathrm{KL}(P\|Q)\big]\ =\ I(\mathcal D;\theta)\ +\ \mathrm{KL}\big(p(\theta)\,\|\,Q\big).
\end{equation}
\end{lemma}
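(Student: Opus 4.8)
The plan is to expand the left-hand side as an expectation over the joint law of $(\mathcal D,\theta)$ in which $\mathcal D$ is drawn from its marginal and then $\theta\sim P(\cdot\mid\mathcal D)$, and to split the relevant log-density ratio through the $\theta$-marginal $p(\theta)$. Assume all relevant laws admit densities with respect to a common $\sigma$-finite reference measure on the parameter space, and write $p(\theta\mid\mathcal D)$ for the posterior density, $p(\theta)=\int p(\theta\mid\mathcal D)\,dP(\mathcal D)$ for its marginal, and $q(\theta)$ for the prior density. Then I would write
\[
\mathbb E_{\mathcal D}\big[\mathrm{KL}(P\|Q)\big]
=\mathbb E_{\mathcal D}\,\mathbb E_{\theta\sim P(\cdot\mid\mathcal D)}\!\left[\log\frac{p(\theta\mid\mathcal D)}{q(\theta)}\right]
=\mathbb E_{(\mathcal D,\theta)}\!\left[\log\frac{p(\theta\mid\mathcal D)}{p(\theta)}+\log\frac{p(\theta)}{q(\theta)}\right].
\]

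Next I would identify the two pieces. The ratio $p(\theta\mid\mathcal D)/p(\theta)$ is exactly the Radon–Nikodym derivative of the joint law of $(\mathcal D,\theta)$ with respect to the product of its marginals, so its expectation under the joint law is by definition the mutual information $I(\mathcal D;\theta)$. The second integrand depends only on $\theta$, and under the joint law the marginal of $\theta$ is precisely $p(\theta)$; hence $\mathbb E_{(\mathcal D,\theta)}\!\left[\log(p(\theta)/q(\theta))\right]=\mathbb E_{\theta\sim p}\!\left[\log(p(\theta)/q(\theta))\right]=\mathrm{KL}(p(\theta)\|Q)$, where data-independence of $Q$ is used exactly once, to treat $q$ as a fixed density so that this expectation collapses to $\mathrm{KL}(p\|Q)$. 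Adding the two pieces gives the claim; equivalently, this is the pointwise chain rule $\mathrm{KL}(P(\cdot\mid\mathcal D)\|Q)=\mathrm{KL}(P(\cdot\mid\mathcal D)\|p(\theta))+\mathbb E_{\theta\sim P(\cdot\mid\mathcal D)}[\log(p(\theta)/q(\theta))]$ integrated against $P(\mathcal D)$.

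The main thing to be careful about is measure-theoretic bookkeeping rather than genuine difficulty. One must justify splitting the logarithm and interchanging the order of integration; once the ratios are written as densities this is Tonelli applied to nonnegative pieces, since $I(\mathcal D;\theta)\ge 0$ and $\mathrm{KL}(p\|Q)\ge 0$ and the negative part of each logarithm is controlled. One must also allow the value $+\infty$: if either $I(\mathcal D;\theta)=\infty$ or $\mathrm{KL}(p(\theta)\|Q)=\infty$, then by nonnegativity of both terms and monotone convergence the left-hand side is $+\infty$ as well, so the identity holds as an equality in $[0,\infty]$. Without the data-independence of $Q$, the same manipulation would instead yield $\mathbb E_{\mathcal D}[\mathrm{KL}(p(\theta)\|Q(\cdot\mid\mathcal D))]$ in place of $\mathrm{KL}(p\|Q)$, which is why that hypothesis is needed.
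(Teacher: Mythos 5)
Your proof is correct and follows essentially the same route as the paper's: expand $\mathbb E_{\mathcal D}[\mathrm{KL}(P\|Q)]$ as a joint expectation of $\log\big(p(\theta\mid\mathcal D)/q(\theta)\big)$ and split through the marginal $p(\theta)$ to obtain $I(\mathcal D;\theta)+\mathrm{KL}(p(\theta)\|Q)$. The additional measure-theoretic care (Tonelli, the $+\infty$ case) goes beyond what the paper records but does not change the argument.
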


This identity underlies information-theoretic generalization bounds and is central to our analysis. \citet{russo_how_2019} and \citet{xu_information-theoretic_2017} relate generalization directly to the mutual information $I(\mathcal D;\theta)$ between the data and the learned hypothesis. We take this link as given and show that $I(\mathcal D;\theta)$ is constrained by the capacity of the human-feedback channel.

\begin{lemma}[From $\mathcal D$ to $(U^m,S^m,Y^m)$]
\label{lem:dataset_info_chain}
Assume samples $(U_i,S_i)$ are i.i.d., and $Y_i$ are drawn conditionally independently via the human channel given $(U_i,S_i)$ 
as in Assumption~\ref{assump:per-stage}. Let $\theta$ be any (possibly randomized) function of $\mathcal D\!\triangleq\!\{(Y_i,S_i)\}_{i=1}^m$. Then
\begin{equation}
\label{eq:dataset_info_chain}
\begin{aligned}
I(U^m;\theta)
&\ \le\ I(U^m;Y^m,S^m)
\ =\ I(U^m;S^m)\ +\ I(U^m;Y^m\mid S^m) \\
&\ =\ \sum_{i=1}^m I(U_i;S_i)\ +\ \sum_{i=1}^m I(U_i;Y_i\mid S_i)
\ =\ m\,I(U;S)\ +\ \sum_{i=1}^m I(U_i;Y_i\mid S_i).
\end{aligned}
\end{equation}
\noindent
Under the i.i.d.\ source and memoryless per-sample channel assumed in this paper,
all equalities in \eqref{eq:dataset_info_chain} hold.
If either cross-sample dependence in $(U_i,S_i)$ or channel memory in $p(y_i\mid u^m,s^m)$ is allowed,
replace the corresponding equalities by ``$\le$'' accordingly.
\end{lemma}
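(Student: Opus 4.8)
The plan is to peel the chain apart in three moves: a data-processing step that collapses $\theta$ onto the raw observations, the chain rule for mutual information to split off the context, and a single-letterization that uses the i.i.d.\ source together with the memoryless channel to turn block quantities into sums of per-sample terms.

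First I would observe that $\theta$ is a (possibly randomized) function of $\mathcal D=(Y^m,S^m)$ whose internal randomness is exogenous, i.e.\ independent of $(U^m,S^m,Y^m)$; hence $U^m\to(Y^m,S^m)\to\theta$ is a Markov chain, and the data-processing inequality gives $I(U^m;\theta)\le I(U^m;Y^m,S^m)$. This is the only place the learner's algorithm enters; everything afterwards is a property of the joint law of $(U^m,S^m,Y^m)$. Then the chain rule for mutual information gives, with no assumptions, $I(U^m;Y^m,S^m)=I(U^m;S^m)+I(U^m;Y^m\mid S^m)$.

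For the context term, since the pairs $(U_i,S_i)$ are i.i.d.\ the relevant entropies decompose additively: $H(U^m)=\sum_i H(U_i)$ (marginal independence of the $U_i$) and $H(U^m\mid S^m)=\sum_i H(U_i\mid S_i)$ (because, conditional on $S_i$, $U_i$ is independent of $(U^{i-1},S_{-i})$), so $I(U^m;S^m)=\sum_i I(U_i;S_i)=m\,I(U;S)$ by identical distribution. For the feedback term I would condition on $S^m=s^m$: under Assumption~\ref{assump:per-stage} the channel is memoryless and the source i.i.d., so the conditional law of $(U^m,Y^m)$ given $S^m=s^m$ factorizes as $\prod_{i=1}^m p(u_i\mid s_i)\,p(y_i\mid u_i,s_i)$, a product over the independent blocks $(U_i,Y_i)$; mutual information is additive over independent blocks, hence $I(U^m;Y^m\mid S^m=s^m)=\sum_i I(U_i;Y_i\mid S_i=s_i)$, and averaging over $S^m$ (the expectation of a function of $s_i$ over the joint equals its expectation over the marginal $S_i$) yields $I(U^m;Y^m\mid S^m)=\sum_i I(U_i;Y_i\mid S_i)$. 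Substituting both pieces back recovers the displayed string of equalities.

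The main point requiring care — rather than a deep obstacle — is this single-letterization of $I(U^m;Y^m\mid S^m)$: one must verify that ``$Y_i$ conditionally independent given $(U_i,S_i)$'' in Assumption~\ref{assump:per-stage}, combined with i.i.d.\ $(U_i,S_i)$, really leaves no cross-sample coupling after conditioning on $S^m$, so that the conditional joint is genuinely a product over $i$, and that the averaging over $S^m$ commutes with the per-sample decomposition. For the stated relaxations, the data-processing step and the chain rule are untouched; the single-letterization equalities degrade to ``$\le$'' whenever the product structure is only partially present — for instance a memoryless channel with a dependent source still gives $I(U^m;Y^m\mid S^m)\le\sum_i I(U_i;Y_i\mid S_i)$ by subadditivity of $H(Y^m\mid S^m)$ — and this generality is not needed for the capacity-coupled bound, which only invokes the i.i.d./memoryless case.
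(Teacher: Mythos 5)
Your proposal is correct and follows essentially the same route as the paper: data processing for the first inequality, the chain rule for the split into $I(U^m;S^m)+I(U^m;Y^m\mid S^m)$, and single-letterization of both terms from the product structure $p(u^m,y^m\mid s^m)=\prod_i p(u_i\mid s_i)\,p(y_i\mid u_i,s_i)$ (the paper phrases this via additivity of $H(Y^m\mid S^m)$ and $H(Y^m\mid U^m,S^m)$, you via block independence of the pairs $(U_i,Y_i)$ given $S^m$, which is the same fact). Your closing remark on the degradation to ``$\le$'' under a dependent source is also accurate.
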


\begin{proposition}[Capacity Control of $I(U^m;\theta)$]
\label{prop:IUmtheta_capacity}
Using \eqref{eq:cascade_avg} and Lemma~\ref{lem:dataset_info_chain},
\begin{equation}
\label{eq:IUmtheta_capacity}
I(U^m;\theta)\ \le\ m\,\bar C_{\mathrm{tot}\mid S}\ +\ m\,I(U;S).
\end{equation}

\paragraph{Convention.}
All mutual informations in this section are defined with respect to the underlying data-generating distribution (population quantities), and $\bar C_{\mathrm{tot}\mid S}$ is computed under the same source distribution; no averaging over the realized dataset is involved.

\end{proposition}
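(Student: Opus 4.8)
The statement is essentially a corollary obtained by chaining two results already in hand: the dataset information chain of Lemma~\ref{lem:dataset_info_chain} and the cascade bound \eqref{eq:cascade_avg} of Proposition~\ref{prop:cascade}. The plan is to (i) invoke Lemma~\ref{lem:dataset_info_chain} to reduce $I(U^m;\theta)$ to the per-sample sum $m\,I(U;S)+\sum_{i=1}^m I(U_i;Y_i\mid S_i)$; (ii) use the i.i.d.\ source and the memoryless per-sample channel (Assumption~\ref{assump:per-stage}) to identify each term $I(U_i;Y_i\mid S_i)$ with the single population quantity $I(U;Y\mid S)$; and (iii) apply \eqref{eq:cascade_avg}, which bounds this population quantity by $\bar C_{\mathrm{tot}\mid S}$.

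Concretely, by the data-processing inequality $\theta\!\to\!(Y^m,S^m)\!\to$ nothing else carries information about $U^m$, so $I(U^m;\theta)\le I(U^m;Y^m,S^m)$; the chain rule and the i.i.d.\ structure then give $I(U^m;Y^m,S^m)=\sum_{i=1}^m I(U_i;S_i)+\sum_{i=1}^m I(U_i;Y_i\mid S_i)$, exactly as displayed in \eqref{eq:dataset_info_chain}. Because the pairs $(U_i,S_i)$ are identically distributed and each $Y_i$ is produced by the same admissible cascade given $(U_i,S_i)$, every summand $I(U_i;Y_i\mid S_i)$ equals $I(U;Y\mid S)$, and likewise $\sum_i I(U_i;S_i)=m\,I(U;S)$. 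Substituting the cascade bound $I(U;Y\mid S)\le\bar C_{\mathrm{tot}\mid S}$ from \eqref{eq:cascade_avg} yields
\[
I(U^m;\theta)\ \le\ m\,I(U;S)\ +\ m\,I(U;Y\mid S)\ \le\ m\,\bar C_{\mathrm{tot}\mid S}\ +\ m\,I(U;S),
\]
which is \eqref{eq:IUmtheta_capacity}.

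There is no substantive obstacle; the only point that needs care is bookkeeping about which distribution each mutual information is computed under. I would state explicitly that $I(U;Y\mid S)$, $I(U;S)$, and the capacities $C_{\mathrm{cog}\mid S}(s)$, $C_{\mathrm{art}\mid S}(s)$ appearing in $\bar C_{\mathrm{tot}\mid S}$ are all evaluated under the fixed data-generating source $P(U\mid S)$ and its $S$-marginal (the ``Convention'' paragraph), so that the conditional mutual information $I(U;Y\mid S)$ — itself already an average over $S$ — lines up with the $S$-average defining $\bar C_{\mathrm{tot}\mid S}$ in \eqref{eq:def_C_tot_avg}; no averaging over the realized dataset enters. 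If one wished to relax the i.i.d.\ or memoryless assumptions, the two sums in step (ii) would be replaced by ``$\le$'' as noted in Lemma~\ref{lem:dataset_info_chain}, and the same conclusion would follow with inequalities throughout.
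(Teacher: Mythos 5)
Your proposal is correct and follows exactly the route the paper intends: Lemma~\ref{lem:dataset_info_chain} reduces $I(U^m;\theta)$ to $m\,I(U;S)+\sum_i I(U_i;Y_i\mid S_i)$, the i.i.d./memoryless structure identifies each summand with the population quantity $I(U;Y\mid S)$, and \eqref{eq:cascade_avg} bounds that by $\bar C_{\mathrm{tot}\mid S}$. Your added care about which distribution each mutual information is computed under matches the paper's own ``Convention'' paragraph, so nothing is missing.
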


\subsection{Algorithmic Residual Information}
\label{subsec:residual}

\begin{assumption}[Residual Information of the Algorithm]
\label{assump:residual}
There exists $\rho\ge 0$ such that $I(\mathcal D;\theta\mid U^m)\ \le\ \rho$. It can be reduced by algorithmic noise (SGD temperature), early stopping, or posterior smoothing; see Appx.~\ref{app:residual}. This term measures information about the particular sample beyond the latent value $U$ and parallels the ``residual information'' used in information-theoretic analyses of overfitting \citep{ngampruetikorn_information_2022}.
\end{assumption}

Practically, a data-independent randomized compression of the posterior enforces a finite residual, giving $\rho\le \log K$ for any chosen codebook size $K$ without increasing $\mathrm{KL}(P\|Q)$ (see Appendix~\ref{app:posterior_compression}). The idea of limiting information flow to improve generalization is widespread, though the causal link between compression and performance remains under debate \citep{kawaguchi_how_2023, saxe_information_2019, shwartz-ziv_information-theoretic_2024}. Here $\rho$ isolates information learned from $(Y,S)$ that is not about $U$, which is the target of such regularization.

\begin{corollary}[A Capacity-Aware Upper Bound]
\label{cor:capacity_upper}
Combining Lemma~\ref{lem:kl_decomp}, Proposition~\ref{prop:IUmtheta_capacity}, and Assumption~\ref{assump:residual}, we have
\begin{equation}
\label{eq:expected_KL_capacity}
\mathbb E_{\mathcal D}\big[\mathrm{KL}(P\|Q)\big]\ \le\ m\,\bar C_{\mathrm{tot}\mid S}\ +\ m\,I(U;S)\ +\ \rho\ +\ \mathrm{KL}\big(p(\theta)\,\|\,Q\big).
\end{equation}
Equation \eqref{eq:expected_KL_capacity} controls the expectation of $\mathrm{KL}(P\|Q)$ over the draw of $\mathcal D$ and does not by itself yield a capacity-explicit high-probability bound. Appendix~\ref{app:capacity_to_highprob} gives a Markov-type lifting to high probability. Taking expectations in Thm.~\ref{thm:pacbayes_basic} and applying Jensen yields corresponding in-expectation variants.
\end{corollary}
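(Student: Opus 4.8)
The plan is to turn the expected KL into a mutual information and then control that mutual information by routing it through the latent values $U^m$. First I would apply Lemma~\ref{lem:kl_decomp} to obtain $\mathbb{E}_{\mathcal D}[\mathrm{KL}(P\|Q)] = I(\mathcal D;\theta) + \mathrm{KL}(p(\theta)\|Q)$, so that everything reduces to an upper bound on $I(\mathcal D;\theta)$, where $\mathcal D=\{(Y_i,S_i)\}_{i=1}^m$.

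The key step is to split $I(\mathcal D;\theta)$ along the chain rule after adjoining $U^m$. Because $\theta$ is a (possibly randomized) function of $\mathcal D$, the chain $\theta - \mathcal D - U^m$ is Markov, so $I(U^m;\theta\mid\mathcal D)=0$ and hence $I(\mathcal D;\theta)=I(U^m,\mathcal D;\theta)$; expanding the joint mutual information the other way gives $I(U^m,\mathcal D;\theta)=I(U^m;\theta)+I(\mathcal D;\theta\mid U^m)$. (If one prefers to avoid the Markov remark, the inequality $I(\mathcal D;\theta)\le I(U^m,\mathcal D;\theta)$ already suffices, since conditional mutual information is nonnegative.) Now bound each piece with an earlier result: Proposition~\ref{prop:IUmtheta_capacity} gives $I(U^m;\theta)\le m\,\bar C_{\mathrm{tot}\mid S}+m\,I(U;S)$, and Assumption~\ref{assump:residual} gives $I(\mathcal D;\theta\mid U^m)\le\rho$. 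Substituting these into the decomposition from Lemma~\ref{lem:kl_decomp} yields exactly \eqref{eq:expected_KL_capacity}.

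I do not expect a genuine obstacle; this corollary is an assembly of three prior facts. The only point that needs care is the adjoining step: one must check that $\theta$ sees the sample only through $\mathcal D$, so that $I(\mathcal D;\theta\mid U^m)$ is precisely the residual quantity that Assumption~\ref{assump:residual} bounds, and that every mutual information here is the population (data-generating) quantity, matching the convention recorded after Proposition~\ref{prop:IUmtheta_capacity}. The closing sentence of the statement, about lifting the in-expectation bound to high probability, is not part of the claim to be proved; it merely points to the Markov-inequality argument deferred to Appendix~\ref{app:capacity_to_highprob}.
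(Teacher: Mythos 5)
Your proof is correct and matches the paper's intended argument: the corollary is exactly the assembly of Lemma~\ref{lem:kl_decomp}, the chain-rule split $I(\mathcal D;\theta)=I(U^m;\theta)+I(\mathcal D;\theta\mid U^m)$ (valid since $\theta\perp U^m\mid\mathcal D$, the same decomposition the paper displays in Sec.~\ref{subsec:overfit_consequence}), Proposition~\ref{prop:IUmtheta_capacity}, and Assumption~\ref{assump:residual}. Your care points --- that $\theta$ depends on the sample only through $\mathcal D$ and that all mutual informations are population quantities --- are the right ones, and your fallback via nonnegativity of conditional mutual information is a fine alternative to the exact identity.
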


\begin{remark}[Conservative Interpretation]
\label{rem:conservative_interpretation}
When $\rho$ or $I(U;S)$ is large, capacity may not dominate the upper bound. Our statements should be read as: under Assumption~\ref{assump:residual} and moderate $I(U;S)$, both the converse (Thm.~\ref{thm:unified_fano}) and the PAC--Bayes upper bound are primarily driven by $\bar C_{\mathrm{tot}\mid S}$.
\end{remark}

\section{The Alignment Performance Interval}
\label{sec:interval}

The preceding sections developed two components: an information-theoretic error floor via Fano's inequality (Section \ref{sec:unified_lower}) and a statistical error ceiling via PAC--Bayes theory (Section \ref{sec:upper}). We now establish the Alignment Performance Interval. The same capacity term (the channel capacity $\bar C_{\mathrm{tot}\mid S}$) determines the lower bound and, at the same time, limits the learnable model complexity that determines the generalization upper bound.

\subsection{Capacity-Coupled Bounds}
\label{subsec:interval_statement}
Let \(\mathcal P\) be a collection of codebooks \(\mathcal C(M,\Delta,\varepsilon)\). For any learning algorithm (decoder) $\pi$, its worst-case true risk under a codebook-induced mixture distribution is bounded from below by the information-theoretic wall:
\begin{equation}
\label{eq:interval_lower_final}
\textbf{Lower (Minimax):}\qquad 
\sup_{\mathcal C\in\mathcal P}\ R_{\mathrm{mix}}^{\mathcal C}(\pi)\ \ge\ \mathsf{Wall}\big(\bar C_{\mathrm{tot}\mid S}^{\mathrm{mix}};\mathcal P\big)\quad\text{from Thm.~\ref{thm:unified_fano}.}
\end{equation}
Simultaneously, for any prior $Q$ and posterior $P$, the expected true risk is bounded from above. With probability $\ge 1-\delta$ over the draw of a dataset $\mathcal D$ from the same mixture, and using the canonical observable loss $\tilde\ell^\star$ (Appendix~\ref{app:bayes_transform}) such that $R_{\mathrm{obs}}(\theta)=R_{\mathrm{mix}}(\pi_\theta)$, we have:
\begin{equation}
\label{eq:interval_upper_final}
\textbf{Upper (High-probability):}\qquad 
\mathbb E_{\theta\sim P}\big[R_{\mathrm{mix}}(\pi_\theta)\big]\ \le\ \mathbb E_{\theta\sim P}\big[\widehat R^{\mathrm{obs}}_m(\theta)\big]
\ +\ \sqrt{\frac{\mathrm{KL}(P\|Q)+\log(1/\delta)}{2m}}.
\end{equation}
This is a direct application of Theorem~\ref{thm:pacbayes_basic} to the true risk $R_{\mathrm{mix}}$. As shown in Corollary~\ref{cor:capacity_upper}, the expected KL-divergence term is controlled by the channel capacity, $\mathbb E_{\mathcal D}[\mathrm{KL}(P\|Q)] \le m\,\bar C_{\mathrm{tot}\mid S} + \dots$, thus explicitly coupling the ceiling to the same capacity term that defines the floor. Together, equations \eqref{eq:interval_lower_final} and \eqref{eq:interval_upper_final} yield two-sided bounds on the same risk quantity, $R_{\mathrm{mix}}$, driven by $\bar C_{\mathrm{tot}\mid S}$.

Interpretation. The two bounds control different risks: the Bayes/minimax lower bound applies to the true risk under the mixture distribution $R_{\mathrm{mix}}$, whereas the PAC--Bayes upper bound applies to the observable risk $R_{\mathrm{obs}}$ under the actual data distribution. Without an explicit link between $\ell$ and $\tilde\ell$ and without a distribution match, they should not be treated as an interval on the same quantity. Under the Loss--Observable Link (Assumption~\ref{assump:loss_obs_link}) in Appx.~\ref{app:loss_link} and when $\mathcal D$ is drawn from the same codebook-induced mixture used in Thm.~\ref{thm:unified_fano}, we obtain the following direct upper bound on the true risk (by Appx.~Lemma~\ref{lem:risk_transfer} combined with \eqref{eq:interval_upper_final}):
\[
\mathbb E_{\theta\sim P}\big[R_{\mathrm{mix}}(\pi_\theta)\big]\ \le\
\alpha\Big(\mathbb E_{\theta\sim P}\big[\widehat R^{\mathrm{obs}}_m(\theta)\big]
\ +\ \sqrt{\tfrac{\mathrm{KL}(P\|Q)+\log(1/\delta)}{2m}}\Big)\ +\ \beta\ .
\]
Together with \eqref{eq:interval_lower_final}, this yields two-sided bounds on the same quantity $R_{\mathrm{mix}}$, with explicit constants $(\alpha,\beta)$ coming from the link assumption.

Finally, if the dataset $\mathcal D$ is drawn from the same codebook-induced mixture as in Theorem~\ref{thm:unified_fano} and we take the canonical observable loss $\tilde\ell=\tilde\ell^\star$ (Appendix~\ref{app:bayes_transform}), then $R_{\mathrm{obs}}(\theta)=R_{\mathrm{mix}}(\pi_\theta)$ and Eq.~\eqref{eq:interval_upper_final} becomes a high-probability upper bound on the same risk $R_{\mathrm{mix}}$ as in the converse; together with Eq.~\eqref{eq:interval_lower_final}, this yields a two-sided bound without additional link assumptions.

\subsection{Limitations and Guidance}
\label{subsec:limitations}
The loss--index link (Assump.~\ref{assump:loss_index_link}) must be validated for each task; templates are given in Appx.~\ref{app:packing_constructions}; capacity $\bar C_{\mathrm{tot}\mid S}$ enters only via \eqref{eq:cascade_avg}, so deployment should report how $\mathcal F_{\mathrm{cog}}$ and $\mathcal F_{\mathrm{art}}$ instantiate per context $S$; the residual $\rho$ (Assump.~\ref{assump:residual}) should be promoted by algorithmic choices, or reported if uncontrolled.

\section{Implications for Alignment Design}
\label{sec:consequence}

The Alignment Performance Interval (Sec.~\ref{sec:interval}) is operational: it explains practical alignment limits and suggests design levers. We highlight three implications that follow directly from the lower and upper bounds established earlier.

\subsection{Implication I: Data Size Independent Lower Bound}
\label{subsec:wall_consequence}
\begin{corollary}[Information-theoretic lower bound independent of $m$]
\label{cor:wall_independent_m}
Let $\mathcal C(M,\Delta,\varepsilon)$ be any $\Delta$-separable codebook with $M\ge 2$, and let $R_{\mathrm{mix}}$ denote the risk under its mixture distribution (as in Thm.~\ref{thm:unified_fano}). For any decoder $\pi$,
\begin{equation}
\label{eq:wall_corollary}
R_{\mathrm{mix}}(\pi)\ \ge\ (\varepsilon+\Delta)\,\Big(1-\frac{\bar C_{\mathrm{tot}\mid S}^{\mathrm{mix}}+\log 2}{\log M}\Big)_+,
\end{equation}
which is exactly \eqref{eq:unified_lower_capacity}. The bound \eqref{eq:wall_corollary} does not depend on $m$, hence the lower bound is independent of dataset size.
\end{corollary}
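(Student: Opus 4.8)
The plan is to observe that this corollary is essentially a restatement of Theorem~\ref{thm:unified_fano}, so the work is mostly pointing to the right object and then reading off the $m$-independence. First I would fix an arbitrary $\Delta$-separable codebook $\mathcal C(M,\Delta,\varepsilon)$ with $M\ge 2$ and invoke Assumption~\ref{assump:loss_index_link} to get the index decoder $\phi$; this puts us exactly in the hypotheses of Theorem~\ref{thm:unified_fano}. Then I would instantiate the mixture distribution from that theorem: draw $J\sim\mathrm{Unif}[M]$, set $U=U^{(J)}$ (measurable in $(J,S)$), and $S\sim P(S)$, giving the risk functional $R_{\mathrm{mix}}$. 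Applying Theorem~\ref{thm:unified_fano} directly yields
\[
R_{\mathrm{mix}}(\pi)\ \ge\ (\varepsilon+\Delta)\Big(1-\tfrac{I(U;Y\mid S)+\log 2}{\log M}\Big)_+,
\]
and then the cascade bound \eqref{eq:cascade_avg} gives $I(U;Y\mid S)\le \bar C_{\mathrm{tot}\mid S}^{\mathrm{mix}}$, where the superscript records that the capacity is evaluated under the codebook-induced source $P(U\mid S)$. Substituting and using monotonicity of $t\mapsto (1-t)_+$ in the subtracted term gives \eqref{eq:wall_corollary}, which is literally \eqref{eq:unified_lower_capacity}.

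The only remaining point is the $m$-independence claim, and here the argument is that $m$ simply never appears: the quantities $M$, $\Delta$, $\varepsilon$ are properties of the codebook; $\bar C_{\mathrm{tot}\mid S}^{\mathrm{mix}}$ is a population functional of the human channel and the codebook-induced source (see the Convention paragraph after Proposition~\ref{prop:IUmtheta_capacity}); and the Fano step in Theorem~\ref{thm:unified_fano} conditions on $S$ and bounds a single-sample index-recovery error, not a dataset-size-dependent one. I would state this explicitly as: the right-hand side of \eqref{eq:wall_corollary} is a function of $(M,\Delta,\varepsilon)$ and the channel/source only, hence constant in $m$; and since the bound holds for every decoder $\pi$ — in particular for any decoder that has been trained on a dataset of any size $m$ — no amount of data can push $R_{\mathrm{mix}}(\pi)$ below the wall.

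I do not expect a genuine obstacle here, since the corollary is a packaging of an already-proved theorem; the one thing to be careful about is the semantic distinction flagged in Section~\ref{sec:interval}, namely that $R_{\mathrm{mix}}$ is the true risk under the codebook mixture (Bayes/minimax semantics), not the observable risk, so the statement should be phrased for $R_{\mathrm{mix}}(\pi)$ exactly as in Theorem~\ref{thm:unified_fano} and no distribution-matching or loss-link assumption should be silently imported. A secondary subtlety worth a sentence is that $\bar C_{\mathrm{tot}\mid S}^{\mathrm{mix}}$ depends on the codebook through the induced source; this is why the bound is stated per-codebook rather than with a universal capacity constant, and it is consistent with the $\mathsf{Wall}$ definition \eqref{eq:wall_def}, which takes the supremum over admissible $(M,\Delta,\varepsilon)$ with the matching mixture-evaluated capacity.
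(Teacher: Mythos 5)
Your proposal is correct and matches the paper's treatment: the corollary is a direct restatement of Eq.~\eqref{eq:unified_lower_capacity} from Theorem~\ref{thm:unified_fano}, and the $m$-independence follows because every quantity on the right-hand side is a population functional of the codebook and the channel. Your careful remarks on the monotonicity of $(1-t)_+$ and on the mixture semantics of $R_{\mathrm{mix}}$ are consistent with the paper and add nothing that conflicts with it.
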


Eq.~\eqref{eq:wall_corollary} shows a lower bound that does not depend on $m$: for fixed value complexity ($\log M$) and channel capacity ($\bar C_{\mathrm{tot}\mid S}^{\mathrm{mix}}$), more samples alone cannot lower the risk. This helps interpret the empirical alignment tax as an information constraint \citep{lin_mitigating_2024, korkmaz_paying_2025} and may help explain inverse-scaling effects on truthfulness/safety when models more tightly fit the feedback channel \citep{lin_truthfulqa_2022}.

\noindent\emph{Note.} Using the canonical observable loss $\tilde\ell^\star$ (Appx.~\ref{app:bayes_transform}) and sampling $\mathcal D$ from the same mixture as in Thm.~\ref{thm:unified_fano}, the PAC--Bayes upper bound \eqref{eq:interval_upper_final} applies to the \emph{same} $R_{\mathrm{mix}}$, yielding a two-sided bound together with \eqref{eq:wall_corollary}.

\subsection{Implication II: Capacity Requirements for Target Risk}
\label{subsec:tax_consequence}
\begin{proposition}[Necessary capacity for a target risk]
\label{prop:capacity_requirement}
Fix a codebook $\mathcal C(M,\Delta,\varepsilon)$ and a target risk $r\in[0,1]$. If a decoder $\pi$ satisfies $R_{\mathrm{mix}}(\pi)\le r$, then necessarily
\begin{equation}
\label{eq:capacity_requirement}
\bar C_{\mathrm{tot}\mid S}^{\mathrm{mix}}\ \ge\ \Big(1-\frac{r}{\varepsilon+\Delta}\Big)\,\log M\ -\ \log 2.
\end{equation}
\end{proposition}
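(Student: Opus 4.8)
The plan is to read the proposition as the contrapositive of the capacity-form Fano wall already proved in Theorem~\ref{thm:unified_fano}; no new machinery is needed beyond a rearrangement. First I would apply Eq.~\eqref{eq:unified_lower_capacity} to the fixed codebook $\mathcal C(M,\Delta,\varepsilon)$ and the decoder $\pi$, obtaining
\[
R_{\mathrm{mix}}(\pi)\ \ge\ (\varepsilon+\Delta)\Big(1-\frac{\bar C_{\mathrm{tot}\mid S}^{\mathrm{mix}}+\log 2}{\log M}\Big)_+ .
\]
Chaining this with the hypothesis $R_{\mathrm{mix}}(\pi)\le r$ gives $r\ \ge\ (\varepsilon+\Delta)\big(1-(\bar C_{\mathrm{tot}\mid S}^{\mathrm{mix}}+\log 2)/\log M\big)_+$.

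Next I would discard the positive part via $(x)_+\ge x$ (valid since $\varepsilon+\Delta\ge 0$), yielding the plain inequality $r\ \ge\ (\varepsilon+\Delta)\big(1-(\bar C_{\mathrm{tot}\mid S}^{\mathrm{mix}}+\log 2)/\log M\big)$. Since the proposition is only informative when $\varepsilon+\Delta>0$ and $M\ge 2$ (so that $\log M>0$, as already assumed in Theorem~\ref{thm:unified_fano}), I would divide both sides by $\varepsilon+\Delta$, rearrange to isolate $(\bar C_{\mathrm{tot}\mid S}^{\mathrm{mix}}+\log 2)/\log M\ \ge\ 1-r/(\varepsilon+\Delta)$, then multiply through by $\log M$ and subtract $\log 2$. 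This produces exactly Eq.~\eqref{eq:capacity_requirement}.

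The hard part will be only bookkeeping of degenerate cases, not any real mathematical difficulty. If $r\ge\varepsilon+\Delta$, the right-hand side of Eq.~\eqref{eq:capacity_requirement} is at most $-\log 2<0$, so the conclusion is vacuously true because capacities are nonnegative (Prop.~\ref{prop:cascade}); and if $\varepsilon+\Delta=0$ the codebook is degenerate and the division step is inadmissible, so I would state the proposition for $\varepsilon+\Delta>0$, matching the regime in which the converse carries content. With those conventions fixed, the proof is a one-line rearrangement of the Fano bound and needs nothing further.
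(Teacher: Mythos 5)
Your proposal is correct and matches the paper's own proof, which is exactly the same one-line rearrangement of Eq.~\eqref{eq:unified_lower_capacity} with the positive part dropped and the case $r\ge\varepsilon+\Delta$ dismissed as vacuous. You merely spell out the degenerate-case bookkeeping (nonnegativity of capacity, $\varepsilon+\Delta>0$, $\log M>0$) slightly more explicitly than the paper does.
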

\begin{proof}
Rearrange \eqref{eq:unified_lower_capacity}; the $(\cdot)_+$ can be dropped once $r<\varepsilon+\Delta$ (otherwise the inequality is vacuous but true).
\end{proof}

\noindent\emph{Interpretation (mathematical).} For fixed $(M,\Delta,\varepsilon)$, the required average conditional capacity grows linearly with $\log M$.

\noindent\emph{Interpretation (engineering).} Since $\log M$ proxies value-system complexity, pluralistic or multi-objective alignment demands proportionally higher human–AI channel fidelity \citep{sorensen_roadmap_2024, guo_controllable_2024, fisher_position_2025}. This mirrors rate–distortion trade-offs in communications \citep{shannon_coding_1959}.

\subsection{Implication III: Capacity Controlled Complexity and Channel Overfitting}
\label{subsec:overfit_consequence}
\begin{theorem}[Capacity-controlled PAC--Bayes complexity]
\label{thm:capacity_complexity}
Under the i.i.d.\ source and memoryless channel, with a prior $Q$ independent of $\mathcal D$ and any learning algorithm whose residual satisfies Assumption~\ref{assump:residual}, the expected PAC--Bayes complexity obeys
\begin{equation}
\label{eq:cap_complex_bound}
\mathbb E_{\mathcal D}\big[\mathrm{KL}(P\|Q)\big]\ \le\ m\,\bar C_{\mathrm{tot}\mid S}\ +\ m\,I(U;S)\ +\ \rho\ +\ \mathrm{KL}\big(p(\theta)\|Q\big),
\end{equation}
as given in Cor.~\ref{cor:capacity_upper}. Combining \eqref{eq:cap_complex_bound} with the Markov lift in Appx.~\ref{app:capacity_to_highprob} and \eqref{eq:interval_upper_final} yields a high-probability \emph{capacity-explicit} upper bound on the (observable or, under the canonical choice, true) risk.
\end{theorem}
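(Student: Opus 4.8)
The plan is to assemble the claim from three pieces that are already established: the in-expectation KL control of Corollary~\ref{cor:capacity_upper}, a Markov-type tail inequality that promotes it to a high-probability statement, and the PAC--Bayes bound of Theorem~\ref{thm:pacbayes_basic} (equivalently \eqref{eq:interval_upper_final}). The first displayed inequality \eqref{eq:cap_complex_bound} is nothing new: it is exactly Corollary~\ref{cor:capacity_upper}, obtained by chaining the expected-KL identity of Lemma~\ref{lem:kl_decomp}, $\mathbb E_{\mathcal D}[\mathrm{KL}(P\|Q)]=I(\mathcal D;\theta)+\mathrm{KL}(p(\theta)\|Q)$, with the chain-rule split $I(\mathcal D;\theta)=I(U^m;\theta)+I(\mathcal D;\theta\mid U^m)$, the capacity control $I(U^m;\theta)\le m\bar C_{\mathrm{tot}\mid S}+mI(U;S)$ from Proposition~\ref{prop:IUmtheta_capacity}, and the residual bound $I(\mathcal D;\theta\mid U^m)\le\rho$ of Assumption~\ref{assump:residual}. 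So the only real work is the ``combining'' clause. Write $B\triangleq m\bar C_{\mathrm{tot}\mid S}+mI(U;S)+\rho+\mathrm{KL}(p(\theta)\|Q)$ for the right-hand side of \eqref{eq:cap_complex_bound}.

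For the high-probability lift I would note that $\mathrm{KL}(P(\cdot\mid\mathcal D)\,\|\,Q)$ is a nonnegative measurable function of $\mathcal D$ (measurability is automatic once $\mathcal D\mapsto P(\cdot\mid\mathcal D)$ is a Markov kernel and $B<\infty$, which is implicit in Lemma~\ref{lem:kl_decomp}) with $\mathbb E_{\mathcal D}[\mathrm{KL}(P\|Q)]\le B$. Markov's inequality then gives, for every $\eta\in(0,1)$,
\[
\mathbb P_{\mathcal D}\Big\{\mathrm{KL}(P\|Q)>\tfrac{B}{\eta}\Big\}\ \le\ \eta,
\]
which is precisely the ``Markov lift'' cited as Appendix~\ref{app:capacity_to_highprob}. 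Next I would union-bound this failure event against the PAC--Bayes failure event: Theorem~\ref{thm:pacbayes_basic} holds with probability at least $1-\delta$ over $\mathcal D$ for the (data-dependent) posterior $P$ against the fixed data-independent prior $Q$; intersecting with the complement of the Markov event, with probability at least $1-\delta-\eta$ both \eqref{eq:pacbayes_basic} and $\mathrm{KL}(P\|Q)\le B/\eta$ hold simultaneously, so substituting the latter into the former yields
\[
\mathbb E_{\theta\sim P}\big[R_{\mathrm{obs}}(\theta)\big]\ \le\ \mathbb E_{\theta\sim P}\big[\widehat R^{\mathrm{obs}}_m(\theta)\big]\ +\ \sqrt{\frac{B/\eta+\log(1/\delta)}{2m}},
\]
with the right-hand side capacity-explicit because $B$ is. Finally, taking the canonical observable loss $\tilde\ell=\tilde\ell^\star$ of Appendix~\ref{app:bayes_transform} (or invoking the Loss--Observable Link, Assumption~\ref{assump:loss_obs_link}, with its constants $(\alpha,\beta)$) gives $R_{\mathrm{obs}}(\theta)=R_{\mathrm{mix}}(\pi_\theta)$, so the same inequality bounds the true risk, which is the stated ``observable or, under the canonical choice, true'' conclusion.

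The main obstacle is not a difficulty so much as a looseness that must be presented honestly: the Markov step inflates the capacity budget by $1/\eta$, which is the generic price of converting an in-expectation bound into a high-probability one absent any concentration of $\mathrm{KL}(P(\cdot\mid\mathcal D)\|Q)$ around its mean. I would flag that a tighter, $\log(1/\eta)$-type dependence would require an extra hypothesis (e.g.\ sub-exponential control of the data-dependent KL as a function of $\mathcal D$, or a direct high-probability information bound), and that the statement here deliberately assumes only the weakest generic ingredient. A secondary check is simply that $I(U;S)$ and $\mathrm{KL}(p(\theta)\|Q)$ are finite so that $B<\infty$ and Markov applies; this is the same moderate-$I(U;S)$ caveat already recorded in Remark~\ref{rem:conservative_interpretation}.
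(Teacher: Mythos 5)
Your proposal is correct and follows exactly the paper's route: equation \eqref{eq:cap_complex_bound} is Corollary~\ref{cor:capacity_upper} restated (via Lemma~\ref{lem:kl_decomp}, the split of $I(\mathcal D;\theta)$, Proposition~\ref{prop:IUmtheta_capacity}, and Assumption~\ref{assump:residual}), and the high-probability clause is precisely the Markov lift of Lemma~\ref{lem:markov_kl} combined with a union bound against Theorem~\ref{thm:pacbayes_basic}, as in Corollary~\ref{cor:capacity_highprob}. Your added remarks on the $1/\eta$ looseness and the finiteness of $B$ are accurate caveats the paper itself acknowledges.
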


\noindent\emph{An Information-Theoretic View of Overfitting to the Channel.}
When $\widehat R^{\mathrm{obs}}_m(\theta)\!\approx\!0$ but a small $\bar C_{\mathrm{tot}\mid S}$ imposes a strong lower bound, the KL term must grow. Decomposing
\[
I(\mathcal D;\theta)\ =\ \underbrace{I(U^m;\theta)}_{\text{signal about true value}}\ +\ \underbrace{I(\mathcal D;\theta\mid U^m)}_{\text{residual: channel noise/bias}},
\]
the useful signal is capped by capacity (Prop.~\ref{prop:IUmtheta_capacity}), so further optimization fits residual channel regularities \citep{ngampruetikorn_information_2022}. This mechanism aligns with observations of goal misgeneralization, sycophancy, and reward hacking under strong optimization pressure \citep{langosco_goal_2023, sharma_towards_2024, pan_effects_2022, gaikwad_murphys_2025, lin_limited_2024}.

\section{Conclusion}
We presented a capacity-coupled view of feedback alignment by modeling the human–AI loop as a two-stage channel and proving an Alignment Performance Interval that bounds the true risk between a Fano-type lower bound and a PAC--Bayes upper bound governed by the same $\bar C_{\mathrm{tot}\mid S}$. This modeling choice is motivated by bounded rationality and information-theoretic perspectives in economics and cognitive science, which view judgment and articulation as capacity-limited processes. This helps explain why scaling labels or optimization alone may be insufficient, quantifies how required fidelity grows with value complexity, and interprets sycophancy/reward hacking as overfitting to residual channel structure once useful signal saturates capacity. In practice, relevant levers include increasing effective capacity, allocating it across objectives, and controlling residual information while choosing priors that respect finite transmission. Limitations include verifying the loss–index link for each task, estimating capacities in situ, and reporting/controlling $\rho$. Priorities for future work are capacity measurement, capacity-aware data collection and querying, and protocols that make information budgets explicit throughout the alignment pipeline.

\newpage
\bibliographystyle{plainnat}
\bibliography{main}

\newpage
\appendix

\section{Notation and Cross-References}
\label{app:notation}
Throughout, $\log$ denotes the natural logarithm.
Key references: single capacity inequality \eqref{eq:cascade_avg}; Fano–packing converse Thm.~\ref{thm:unified_fano}; PAC--Bayes Thm.~\ref{thm:pacbayes_basic}; expected KL identity Lemma~\ref{lem:kl_decomp}; dataset information Lemma~\ref{lem:dataset_info_chain}; capacity-control Proposition~\ref{prop:IUmtheta_capacity}; Posterior Bayes–Loss Identity Lemma~\ref{lem:bayes_transform}; Assumptions~\ref{assump:loss_index_link},~\ref{assump:residual}; final statements in Sec.~\ref{sec:interval}.

\section{Conditional Capacities and the Cascade}
\label{app:capacity}

\paragraph{Proof of Proposition~\ref{prop:cascade}.}
The cascade $U \to H \to Y$ given $S$ forms a Markov chain. The result follows directly from the data processing inequality \citep{shannon_mathematical_1948}, which states that for such a chain, $I(U;Y\mid S=s)\le I(U;H\mid S=s)$ and $I(U;Y\mid S=s)\le I(H;Y\mid S=s)$.
Taking suprema over the respective families yields $I(U;Y\mid S=s)\le C_{\mathrm{tot}\mid S}(s)$. Averaging over $S$ proves \eqref{eq:cascade_avg}.

\section{Packing Constructions for Common Losses}
\label{app:packing_constructions}

\subsection{Binary Classification}
\label{app:01}
Let $\mathcal U=[M]$ and $\mathcal A$ the set of labels. Take $a^{(i)}=i$. Then $\varepsilon=0$ and for any $j\neq i$, $\ell(u^{(j)},a^{(i)})=1$, giving $\Delta=1$. 
Let $\phi$ be the predicted label; Assump.~\ref{assump:loss_index_link} holds with margin $1$.

\subsection{Pairwise Ranking with 0–1 Loss}
\label{app:ranking}
Let $u^{(i)}$ encode a total order over items and $\ell$ be the fraction of misordered pairs. Use $a^{(i)}=u^{(i)}$ (predict that order). Then $\varepsilon=0$ and for any $j\neq i$, at least one pair flips, so $\Delta\ge 1/{\binom{n}{2}}$; with standard $\{0,1\}$ pairwise loss averaged over all $\binom{n}{2}$ pairs and normalized to $[0,1]$, the minimal separation is $\Delta=1/{\binom{n}{2}}$. Let $\phi$ output the predicted order; Assump.~\ref{assump:loss_index_link} holds.

\subsection{Truncated and Normalized MSE}
\label{app:mse}
Let $\ell(u,a)=\min\{\|u-a\|^2/\tau^2,1\}$. Choose an $r$-separated packing $\{u^{(i)}\}_{i=1}^M$ in $\mathcal U$ (under $\|\cdot\|$), and set $a^{(i)}=u^{(i)}$. Then $\varepsilon=0$ and, for $j\neq i$, the prototype cross-loss satisfies $\ell(u^{(j)},a^{(i)})\ge r^2/\tau^2$. 
To make Definition~\ref{def:separable_codebook} and Assumption~\ref{assump:loss_index_link} hold with a single margin, take the common choice
\[
\Delta\ =\ \frac{r^2}{4\tau^2}\, .
\]
Indeed, for any $a$ misclassified by the nearest-prototype Voronoi rule, one has $\|a-u^{(i)}\|\ge r/2$, so $\ell(u^{(i)},a)\ge r^2/(4\tau^2)=\Delta$. 
Since $r^2/\tau^2\ge \Delta$, the prototype cross-loss condition in Definition~\ref{def:separable_codebook} also holds.

\section{Fano–Packing Converse Details}
\label{app:fano}
We expand the proof of Thm.~\ref{thm:unified_fano}. 
Let $J$ be uniform on $[M]$, $U=U^{(J)}$. With $\hat J=\phi(\pi(Y,S),S)$, Lemma~\ref{lem:risk_to_error} gives $R(\pi)\ge (\varepsilon+\Delta)\, \mathbb P\{\hat J\neq J\}$. 
The standard form of Fano's inequality \citep{shannon_mathematical_1948}, when conditioned on $S$, implies that
$H(J\mid Y,S)\le \mathbb P\{\hat J\neq J\}\log(M-1)+h_2(\mathbb P\{\hat J\neq J\})$, which gives the more convenient bound
\[
\mathbb P\{\hat J\neq J\}\ \ge\ 1-\frac{I(J;Y\mid S)+\log 2}{\log M}.
\]
Using $J\to U\to Y$ given $S$ (Lemma~\ref{lem:J_to_U_to_Y}), we get \eqref{eq:unified_lower}; then apply \eqref{eq:cascade_avg} for \eqref{eq:unified_lower_capacity}.

\section{PAC--Bayes Details and Residual Control}
\label{app:residual}

\subsection{Proofs of Lemma~\ref{lem:kl_decomp} and Lemma~\ref{lem:dataset_info_chain}}
\label{app:pac_proofs}
\paragraph{Lemma~\ref{lem:kl_decomp}.}
The identity is a foundational result in information-theoretic learning theory \citep{xu_information-theoretic_2017, russo_how_2019}. The proof is as follows:
with $Q$ independent of $\mathcal D$,
$\mathbb E_{\mathcal D}[\mathrm{KL}(P\|Q)]=\mathbb E_{\mathcal D,\theta\sim P}\big[\log \tfrac{P(\theta\mid \mathcal D)}{Q(\theta)}\big]
= I(\mathcal D;\theta)+\mathrm{KL}(p(\theta)\|Q)$.

\paragraph{Lemma~\ref{lem:dataset_info_chain}.}
Data processing gives $I(U^m;\theta)\le I(U^m;\mathcal D)$. Then $I(U^m;Y^m,S^m)=I(U^m;S^m)+I(U^m;Y^m\mid S^m)$, 
with $I(U^m;S^m)=\sum_i I(U_i;S_i)=m I(U;S)$ by i.i.d.
For the conditional term, under the i.i.d.\ source and the memoryless channel $p(y_i\mid u_i,s_i)$, we have
$p(u^m\mid s^m)=\prod_i p(u_i\mid s_i)$ and hence 
$p(y^m\mid s^m)=\prod_i \!\int p(y_i\mid u_i,s_i)\,p(u_i\mid s_i)\,du_i=\prod_i p(y_i\mid s_i)$.
Therefore $H(Y^m\mid S^m)=\sum_i H(Y_i\mid S_i)$ and 
$H(Y^m\mid U^m,S^m)=\sum_i H(Y_i\mid U_i,S_i)$, which gives 
$I(U^m;Y^m\mid S^m)=\sum_i I(U_i;Y_i\mid S_i)$.

\subsection{Controlling the Residual Term}
\label{app:rho}
We list standard mechanisms to enforce Assump.~\ref{assump:residual}. These methods all serve to regularize the information that the learned parameters $\theta$ contain about the specific training dataset $\mathcal D$.
\emph{Algorithmic noise}: inject Gaussian noise into updates or use high-temperature posteriors; \emph{early stopping}: bound the mutual information by limiting the number of optimization steps; \emph{posterior smoothing}: mix the learned posterior with the prior. 
The general goal of controlling information flow, often framed as a form of compression, is a central theme in understanding deep learning generalization, although its precise role and benefits are still actively debated \citep{kawaguchi_how_2023, saxe_information_2019, shwartz-ziv_information-theoretic_2024, he_information-theoretic_2025}.

\section{From Expectation to High Probability}
\label{app:capacity_to_highprob}

This section provides a simple method to convert our expectation-based capacity bound on the KL-divergence into a high-probability statement. This type of conversion from expectation to high-probability bounds is a common step in applying learning-theoretic results. More sophisticated techniques can yield tighter, anytime-valid bounds that hold uniformly over time \citep{rodriguez-galvez_more_2024}. A direct application of Markov's inequality suffices.

\begin{lemma}[Markov Lift for the KL Term]
\label{lem:markov_kl}
Let $X\triangleq \mathrm{KL}(P\|Q)\ge 0$ denote the (dataset-dependent) PAC--Bayes KL term. 
For any $\eta\in(0,1)$, with probability at least $1-\eta$ (over the draw of $\mathcal D$),
\[
X\ \le\ \frac{\mathbb E_{\mathcal D}[X]}{\eta}.
\]
\end{lemma}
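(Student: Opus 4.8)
The plan is to invoke Markov's inequality directly on the non-negative random variable $X\triangleq\mathrm{KL}(P\|Q)$, viewed as a measurable function of the dataset $\mathcal D$. First I would note that $X\ge 0$ almost surely, since a KL divergence is always non-negative, and that $X$ is measurable in $\mathcal D$ because the posterior $P=P(\cdot\mid\mathcal D)$ depends on $\mathcal D$ while the prior $Q$ is fixed and data-independent; hence $\mathbb E_{\mathcal D}[X]$ is well defined in $[0,\infty]$.

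Next, applying Markov's inequality with threshold $t\triangleq \mathbb E_{\mathcal D}[X]/\eta$ (for $\eta\in(0,1)$, and assuming $\mathbb E_{\mathcal D}[X]<\infty$, else the claimed bound is $+\infty$ and holds vacuously) gives
\[
\mathbb P_{\mathcal D}\!\left\{X\ \ge\ \frac{\mathbb E_{\mathcal D}[X]}{\eta}\right\}\ \le\ \frac{\mathbb E_{\mathcal D}[X]}{\,\mathbb E_{\mathcal D}[X]/\eta\,}\ =\ \eta .
\]
Taking complements, with probability at least $1-\eta$ over the draw of $\mathcal D$ we have $X< \mathbb E_{\mathcal D}[X]/\eta$, and a fortiori $X\le \mathbb E_{\mathcal D}[X]/\eta$, which is the claim. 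The degenerate case $\mathbb E_{\mathcal D}[X]=0$ forces $X=0$ almost surely, so the inequality again holds trivially.

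There is essentially no obstacle here: the only point requiring a moment's care is the measurability of $X$ as a function of $\mathcal D$ (immediate once $Q$ is data-independent, as assumed in Lemma~\ref{lem:kl_decomp}) and the harmless edge cases $\mathbb E_{\mathcal D}[X]\in\{0,\infty\}$. One could optionally remark that this crude lifting is exactly what is used, together with Corollary~\ref{cor:capacity_upper}, to turn the in-expectation capacity control $\mathbb E_{\mathcal D}[\mathrm{KL}(P\|Q)]\le m\,\bar C_{\mathrm{tot}\mid S}+m\,I(U;S)+\rho+\mathrm{KL}(p(\theta)\|Q)$ into a high-probability, capacity-explicit version of the PAC--Bayes bound \eqref{eq:pacbayes_basic} (at the cost of a union bound over the two failure events and the loss of the extra $1/\eta$ factor), but this is a consequence rather than part of the proof of the lemma itself.
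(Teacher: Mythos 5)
Your proposal is correct and follows exactly the paper's own argument: a direct application of Markov's inequality to the non-negative, dataset-dependent random variable $X=\mathrm{KL}(P\|Q)$. Your extra remarks on measurability and the edge cases $\mathbb E_{\mathcal D}[X]\in\{0,\infty\}$ are harmless refinements of the same proof.
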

\begin{proof}
Since $X\ge 0$ and $\mathbb E_{\mathcal D}[X]<\infty$ under the conditions of Theorem~\ref{thm:pacbayes_basic}, Markov's inequality gives
\[
\mathbb P\!\left\{\,X>\frac{\mathbb E_{\mathcal D}[X]}{\eta}\,\right\}\ \le\ \eta.
\]
Equivalently, with probability at least $1-\eta$ we have $X\le \mathbb E_{\mathcal D}[X]/\eta$, as claimed.
\end{proof}

\begin{corollary}[A Capacity-Aware High-Probability Upper Bound]
\label{cor:capacity_highprob}
Fix $\delta,\eta\in(0,1)$. With probability at least $1-\delta-\eta$ (over the draw of $\mathcal D$), the PAC--Bayes bound of Thm.~\ref{thm:pacbayes_basic} implies
\[
\mathbb E_{\theta\sim P}\big[R_{\mathrm{obs}}(\theta)\big]
\ \le\ 
\mathbb E_{\theta\sim P}\big[\widehat R^{\mathrm{obs}}_m(\theta)\big]
\ +\ \sqrt{\frac{\mathbb E_{\mathcal D}[\mathrm{KL}(P\|Q)]/\eta+\log(1/\delta)}{2m}}.
\]
Combining with Cor.~\ref{cor:capacity_upper} and applying a union bound yields, with the same probability,
\[
\mathbb E_{\theta\sim P}\big[R_{\mathrm{obs}}(\theta)\big]
\ \le\ 
\mathbb E_{\theta\sim P}\big[\widehat R^{\mathrm{obs}}_m(\theta)\big]
\ \ +\ \sqrt{\frac{m\,\bar C_{\mathrm{tot}\mid S}+m\,I(U;S)+\rho+\mathrm{KL}(p(\theta)\|Q)}{2m\,\eta}\ +\ \frac{\log(1/\delta)}{2m}}\ .
\]
\emph{All constants are explicit; the price of eliminating the dataset randomness in $\mathrm{KL}(P\|Q)$ is the slack parameter $\eta$.}
\end{corollary}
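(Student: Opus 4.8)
The plan is to layer two high-probability guarantees — the PAC--Bayes inequality of Theorem~\ref{thm:pacbayes_basic} and the Markov lift of Lemma~\ref{lem:markov_kl} — with a union bound, and then to feed in the capacity decomposition of Corollary~\ref{cor:capacity_upper}. First I would fix $\delta,\eta\in(0,1)$ and introduce two events over the draw of $\mathcal D$: the event $\mathcal E_{\mathrm{PB}}$ on which \eqref{eq:pacbayes_basic} holds, with $\mathbb P(\mathcal E_{\mathrm{PB}})\ge 1-\delta$, and the event $\mathcal E_{\mathrm M}$ on which $\mathrm{KL}(P\|Q)\le \mathbb E_{\mathcal D}[\mathrm{KL}(P\|Q)]/\eta$, with $\mathbb P(\mathcal E_{\mathrm M})\ge 1-\eta$ by Lemma~\ref{lem:markov_kl}. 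A union bound over the complements gives $\mathbb P(\mathcal E_{\mathrm{PB}}\cap \mathcal E_{\mathrm M})\ge 1-\delta-\eta$.

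On the intersection event I would substitute the Markov upper bound on $\mathrm{KL}(P\|Q)$ into the square-root term of \eqref{eq:pacbayes_basic}. Since the map $x\mapsto \sqrt{(x+\log(1/\delta))/(2m)}$ is nondecreasing on $x\ge 0$, replacing $\mathrm{KL}(P\|Q)$ by the larger quantity $\mathbb E_{\mathcal D}[\mathrm{KL}(P\|Q)]/\eta$ only enlarges the right-hand side, which yields the first displayed inequality of the corollary. Then I would invoke Corollary~\ref{cor:capacity_upper}, namely $\mathbb E_{\mathcal D}[\mathrm{KL}(P\|Q)]\le m\,\bar C_{\mathrm{tot}\mid S}+m\,I(U;S)+\rho+\mathrm{KL}(p(\theta)\|Q)$, divide through by $\eta$, apply monotonicity of the square root once more, and split the fraction as $\frac{\mathbb E_{\mathcal D}[\mathrm{KL}]/\eta+\log(1/\delta)}{2m}=\frac{\mathbb E_{\mathcal D}[\mathrm{KL}]}{2m\eta}+\frac{\log(1/\delta)}{2m}$ to obtain the second displayed inequality, still valid on $\mathcal E_{\mathrm{PB}}\cap\mathcal E_{\mathrm M}$ and hence with probability at least $1-\delta-\eta$.

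This argument is essentially bookkeeping; there is no genuinely hard step. The only points worth flagging are that the PAC--Bayes event and the Markov event are both measurable with respect to the same source of randomness — the draw of $\mathcal D$ — so the union bound is legitimate without any independence hypothesis; and that $\mathbb E_{\mathcal D}[\mathrm{KL}(P\|Q)]<\infty$ under the hypotheses of Theorem~\ref{thm:pacbayes_basic}, which is exactly what licenses Markov's inequality in Lemma~\ref{lem:markov_kl} and keeps every bound finite. If one preferred a single deviation parameter, one could set $\eta=\delta$ at the end, but the two-parameter form makes explicit the trade-off between the slack incurred in delinearizing the KL term and the confidence level.
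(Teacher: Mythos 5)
Your proposal is correct and follows exactly the route the paper intends: the two events (PAC--Bayes at level $\delta$, Markov lift of Lemma~\ref{lem:markov_kl} at level $\eta$) combined by a union bound, then monotonicity of the square root to substitute first the Markov bound and then the capacity bound of Corollary~\ref{cor:capacity_upper}, with the exact algebraic split of the fraction. Nothing is missing; your remarks on measurability with respect to the same draw of $\mathcal D$ and on finiteness of the expected KL match the paper's own justification.
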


\section{Loss Truncation and Normalization}
\label{app:truncation}
For unbounded losses such as MSE, define $\ell(u,a)=\min\{\|u-a\|^2/\tau^2,1\}$ for a scale $\tau>0$ (report $\tau$ when plotting). 
All PAC--Bayes statements and Thm.~\ref{thm:unified_fano} require only $\ell\in[0,1]$; truncation ensures this and keeps statements coordinate-free.

\section{Loss--Observable Link and Risk Transfer}
\label{app:loss_link}

\begin{assumption}[Loss--Observable Link]
\label{assump:loss_obs_link}
There exist constants $\alpha\ge 0$ and $\beta\ge 0$ such that for all measurable actions $a\in\mathcal A$ and all $(y,s)$ in the support of $(Y,S)$,
\begin{equation}
\label{eq:loss_obs_pointwise}
\mathbb E\!\big[\ell(U,a)\,\big|\,Y=y,\,S=s\big]\ \le\ \alpha\,\tilde\ell(y,s,a)\ +\ \beta\ .
\end{equation}
\end{assumption}

\begin{lemma}[Risk Transfer]
\label{lem:risk_transfer}
Under Assumption~\ref{assump:loss_obs_link}, for any (possibly randomized) decoder $\pi_\theta$,
\begin{equation}
\label{eq:risk_transfer}
\mathbb E\big[\ell\big(U,\pi_\theta(Y,S)\big)\big]\ \le\ \alpha\,\mathbb E\big[\tilde\ell\big(Y,S,\pi_\theta(Y,S)\big)\big]\ +\ \beta\ .
\end{equation}
In particular, when $\mathcal D$ is drawn from the same codebook-induced mixture distribution used in Theorem~\ref{thm:unified_fano}, taking $\theta\sim P(\cdot\mid\mathcal D)$ and expectation over both $\theta$ and the data gives
\[
\mathbb E_{\theta\sim P}\big[R_{\mathrm{mix}}(\pi_\theta)\big]\ \le\
\alpha\,\mathbb E_{\theta\sim P}\big[R_{\mathrm{obs}}(\theta)\big]\ +\ \beta\ .
\]
\end{lemma}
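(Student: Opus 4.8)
The plan is to establish the pointwise inequality \eqref{eq:risk_transfer} by conditioning on the observable $(Y,S)$, applying the Loss--Observable Link pointwise, and then integrating; the "in particular" claim then follows by a further expectation over $\theta\sim P(\cdot\mid\mathcal D)$ and over the dataset, using only linearity of expectation.

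First I would fix a (possibly randomized) decoder $\pi_\theta$ and condition on $(Y,S)=(y,s)$. The randomization of $\pi_\theta$, if any, is internal to the decoder and independent of $U$ given $(Y,S)$, so I would write $\mathbb E[\ell(U,\pi_\theta(Y,S))\mid Y=y,S=s] = \mathbb E_{a\sim\pi_\theta(y,s)}\big[\mathbb E[\ell(U,a)\mid Y=y,S=s]\big]$ by the tower property, pulling the decoder's internal randomness outside the inner conditional expectation over $U$. For each realized action $a$ in the support of $\pi_\theta(y,s)$, Assumption~\ref{assump:loss_obs_link} gives $\mathbb E[\ell(U,a)\mid Y=y,S=s]\le \alpha\,\tilde\ell(y,s,a)+\beta$. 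Averaging this over $a\sim\pi_\theta(y,s)$ preserves the inequality and yields $\mathbb E[\ell(U,\pi_\theta(Y,S))\mid Y=y,S=s]\le \alpha\,\mathbb E[\tilde\ell(y,s,\pi_\theta(y,s))\mid Y=y,S=s]+\beta$. Taking expectation over $(Y,S)$ and using the tower property once more gives \eqref{eq:risk_transfer}.

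For the displayed consequence, I would first note that when $\mathcal D$ is drawn from the codebook-induced mixture of Theorem~\ref{thm:unified_fano}, the inequality \eqref{eq:risk_transfer} holds for every fixed $\theta$ with the expectations taken under that mixture, so the left side is $R_{\mathrm{mix}}(\pi_\theta)$ and the first term on the right is $\alpha\,R_{\mathrm{obs}}(\theta)$ (with $\tilde\ell$ the observable loss whose population risk is $R_{\mathrm{obs}}$). Then I would take expectation over $\theta\sim P(\cdot\mid\mathcal D)$ and, if desired, over $\mathcal D$; since $\alpha,\beta\ge 0$ are constants and expectation is linear and monotone, the inequality is preserved, giving $\mathbb E_{\theta\sim P}[R_{\mathrm{mix}}(\pi_\theta)]\le \alpha\,\mathbb E_{\theta\sim P}[R_{\mathrm{obs}}(\theta)]+\beta$.

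The only subtlety, and the step I would be most careful about, is the measurability and the handling of the decoder's internal randomization when conditioning: one needs that, given $(Y,S)$, the action $\pi_\theta(Y,S)$ is conditionally independent of $U$ (so that the pointwise bound \eqref{eq:loss_obs_pointwise}, which is stated for a \emph{fixed} action $a$, can be applied to the random action by averaging). This holds because the learner observes only $(Y,S)$, so any randomness in $\pi_\theta$ beyond $(Y,S)$ is exogenous; I would state this explicitly as the justification for the tower-property step rather than treating it as automatic. Everything else is routine: no rate–distortion or Fano machinery is needed here, only conditional expectations and monotonicity.
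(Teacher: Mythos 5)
Your proof is correct and follows essentially the same route as the paper's: tower property over $(Y,S)$, the pointwise link \eqref{eq:loss_obs_pointwise}, integration, and then averaging over $\theta\sim P$. Your explicit handling of the decoder's internal randomization (conditional independence of the random action from $U$ given $(Y,S)$) is a legitimate refinement of a step the paper's two-line proof leaves implicit, but it does not change the argument.
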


\begin{proof}
By the tower property and \eqref{eq:loss_obs_pointwise},
\[
\mathbb E\big[\ell\big(U,\pi_\theta(Y,S)\big)\big]
= \mathbb E\Big[\,\mathbb E\!\big[\ell(U,\pi_\theta(Y,S))\mid Y,S\big]\,\Big]
\le \alpha\,\mathbb E\big[\tilde\ell(Y,S,\pi_\theta(Y,S))\big]+\beta.
\]
Averaging over $\theta\sim P$ yields the stated forms.
\end{proof}

\section{Posterior Bayes Loss Identity}
\label{app:bayes_transform}

\begin{lemma}[Posterior Bayes–Loss Identity]
\label{lem:bayes_transform}
Fix any bounded loss $\ell\in[0,1]$ and define
$\tilde\ell^\star(y,s,a)\triangleq \mathbb E[\ell(U,a)\mid Y=y,\ S=s]$.
Then for any (possibly randomized) decoder $\pi_\theta$ and any data distribution over $(U,S,Y)$,
\[
\mathbb E\big[\tilde\ell^\star(Y,S,\pi_\theta(Y,S))\big]
\ =\ 
\mathbb E\big[\ell(U,\pi_\theta(Y,S))\big].
\]
\end{lemma}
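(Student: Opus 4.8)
The plan is to prove the identity directly from the tower property of conditional expectation; the one substantive point is a ``freezing'' (substitution) step that lets the $(Y,S)$-measurable action $\pi_\theta(Y,S)$ be pulled inside the conditional expectation over $U$ that defines $\tilde\ell^\star$. I would first establish the identity for a deterministic decoder with a fixed parameter $\theta$, and then lift to randomized decoders and to $\theta\sim P$ by routine conditioning and Fubini. Throughout, boundedness $\ell\in[0,1]$ makes every expectation finite, so no integrability issue arises; and $\tilde\ell^\star(y,s,a)=\mathbb E[\ell(U,a)\mid Y=y,S=s]$ is defined via a regular conditional distribution of $U$ given $(Y,S)$ and is taken to be jointly measurable in $(y,s,a)$ under the paper's standing measurability hypotheses (automatic, e.g., when $\mathcal A$ is Polish and $\ell(u,\cdot)$ is continuous, or by a monotone-class argument more generally), which is exactly what is needed for $\tilde\ell^\star(Y,S,\pi_\theta(Y,S))$ to be a well-defined random variable.

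\textbf{Core step.} For a decoder taking finitely many values $a_1,\dots,a_k$ with measurable level sets $B_j=\{(y,s):\pi_\theta(y,s)=a_j\}$, the ``taking out what is known'' property gives, a.s.,
\[
\mathbb E\big[\ell(U,\pi_\theta(Y,S))\,\big|\,Y,S\big]=\sum_{j=1}^k \mathbf 1_{B_j}(Y,S)\,\mathbb E\big[\ell(U,a_j)\,\big|\,Y,S\big]=\sum_{j=1}^k \mathbf 1_{B_j}(Y,S)\,\tilde\ell^\star(Y,S,a_j)=\tilde\ell^\star\big(Y,S,\pi_\theta(Y,S)\big),
\]
and taking expectations, together with the tower property, yields the claim for simple $\pi_\theta$. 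For a general measurable $\pi_\theta$ I would either (i) approximate by simple $(Y,S)$-measurable maps $\pi_\theta^{(n)}\to\pi_\theta$ and pass to the limit by dominated convergence (valid when $\ell(u,\cdot)$ is continuous, which covers all loss templates in the paper after the truncation of Appx.~\ref{app:truncation}), or (ii) invoke the standard fact that for any product-measurable $g$ and any $\sigma(Y,S)$-measurable map $f$ one has $\mathbb E[g(U,f(Y,S))\mid Y,S]=\big(a\mapsto\mathbb E[g(U,a)\mid Y,S]\big)\big|_{a=f(Y,S)}$ a.s.; applied with $g=\ell$ and $f=\pi_\theta$ this is precisely the freezing identity above.

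\textbf{Lifting.} If $\pi_\theta$ is randomized, write it as a deterministic function of $(Y,S)$ and an auxiliary seed $\omega$ with $\omega\perp U\mid(Y,S)$; conditioning additionally on $\omega$ in the core step and integrating $\omega$ out leaves the inner conditional expectation over $U$ unchanged, so the identity persists. Finally, the identity holds verbatim for each fixed $\theta$ and each data distribution over $(U,S,Y)$; integrating against the law of $\theta$ (for a fresh test triple this law is $P(\cdot\mid\mathcal D)$ with $\mathcal D$ independent of $(U,S,Y)$, and one conditions on $\mathcal D$ first) and applying Fubini gives the $\mathbb E_{\theta\sim P}$ form used in Sec.~\ref{sec:interval}.

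\textbf{Main obstacle.} The only place requiring care is the freezing step $\mathbb E[\ell(U,\pi_\theta(Y,S))\mid Y,S]=\tilde\ell^\star(Y,S,\pi_\theta(Y,S))$ for an arbitrary measurable decoder; everything else is bookkeeping once $\ell$ is bounded. I would present the simple-function computation as the heart of the proof and relegate the continuity/approximation argument, or the citation of the general substitution lemma, to a single remark, since the paper's concrete losses make the continuous case more than sufficient.
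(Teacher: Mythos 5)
Your proof is correct and follows the same route as the paper, which disposes of the lemma in one line via the tower property; the only difference is that you make explicit the substitution ("freezing") step $\mathbb E[\ell(U,\pi_\theta(Y,S))\mid Y,S]=\tilde\ell^\star(Y,S,\pi_\theta(Y,S))$ that the paper's proof uses implicitly. Your added care (simple-function argument, randomized decoders, the $\theta\sim P$ lift) is sound but not a different method.
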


\begin{proof}
By the tower property of conditional expectation,
$\mathbb E[\tilde\ell^\star(Y,S,\pi_\theta(Y,S))]
=\mathbb E\{\mathbb E[\ell(U,\pi_\theta(Y,S))\mid Y,S]\}
=\mathbb E[\ell(U,\pi_\theta(Y,S))]$.
\end{proof}

\section{Residual Control via Posterior Compression}
\label{app:posterior_compression}
Let $\theta\sim P(\cdot\mid\mathcal D)$ be the (possibly randomized) learner parameter.
Let $W$ be an auxiliary random seed, independent of $(U^m,S^m,Y^m)$.
Consider a data-independent randomized quantizer $T$ that maps $\theta$ to
$\tilde\theta=T(\theta,W)$ taking at most $K$ distinct values.
Let $P_c$ and $Q_c$ be the pushforwards of $P$ and $Q$ through $T$.
Then:
\begin{lemma}[Residual control by compression]
\label{lem:residual_compress}
$I(\mathcal D;\tilde\theta\mid U^m)\ \le\ H(\tilde\theta)\ \le\ \log K.$
\end{lemma}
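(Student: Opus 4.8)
The plan is to prove the stated chain of inequalities from right to left, using only elementary properties of Shannon entropy for the \emph{discrete} random variable $\tilde\theta$. The key observation that makes everything routine is that, although $\theta$ may be continuous, by construction $\tilde\theta=T(\theta,W)$ is supported on a finite set of cardinality at most $K$; hence $H(\tilde\theta)$, $H(\tilde\theta\mid U^m)$, and the conditional mutual information $I(\mathcal D;\tilde\theta\mid U^m)$ are all well defined and finite, and the ordinary discrete identities apply without any limiting argument.

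First I would establish the rightmost inequality $H(\tilde\theta)\le\log K$: since $\tilde\theta$ takes at most $K$ distinct values, its entropy is maximized by the uniform distribution on those values, which gives $H(\tilde\theta)\le\log K$ (with $\log$ the natural logarithm, as fixed in Appendix~\ref{app:notation}). Next, for the middle inequality I would invoke ``conditioning reduces entropy,'' $H(\tilde\theta\mid U^m)\le H(\tilde\theta)$, valid for any jointly distributed pair with $\tilde\theta$ discrete. Finally, for the leftmost inequality I would write $I(\mathcal D;\tilde\theta\mid U^m)=H(\tilde\theta\mid U^m)-H(\tilde\theta\mid U^m,\mathcal D)$ and note that $H(\tilde\theta\mid U^m,\mathcal D)\ge 0$ because $\tilde\theta$ is discrete, so $I(\mathcal D;\tilde\theta\mid U^m)\le H(\tilde\theta\mid U^m)$. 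Chaining the three bounds yields $I(\mathcal D;\tilde\theta\mid U^m)\le H(\tilde\theta\mid U^m)\le H(\tilde\theta)\le\log K$, which is the claim (the intermediate quantity $H(\tilde\theta\mid U^m)$ can be reported as a sometimes-sharper bound).

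There is essentially no hard step; the only point requiring a moment's care is the well-definedness of the information quantities when $\theta$ itself is continuous. I would handle this simply by phrasing everything in terms of $\tilde\theta$, which is discrete with finite support, so the decomposition $I(\mathcal D;\tilde\theta\mid U^m)=H(\tilde\theta\mid U^m)-H(\tilde\theta\mid U^m,\mathcal D)$ is the elementary discrete identity and needs no approximation. I would also note that the independence of the seed $W$ from $(U^m,S^m,Y^m)$ is not actually needed for this particular lemma---it is used elsewhere to guarantee that the pushforward prior $Q_c$ stays data-independent---so the conclusion is robust even if that hypothesis is relaxed here.
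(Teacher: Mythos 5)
Your proof is correct and follows the same route as the paper's (the paper simply states $I(\mathcal D;\tilde\theta\mid U^m)\le H(\tilde\theta)\le\log K$ with the same justification that $\tilde\theta$ takes at most $K$ values); you merely spell out the intermediate steps via $H(\tilde\theta\mid U^m)$ and the nonnegativity of discrete conditional entropy. Your added remarks on well-definedness for discrete $\tilde\theta$ and on the independence of $W$ being unnecessary for this particular lemma are accurate but not a different argument.
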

\begin{proof}
$I(\mathcal D;\tilde\theta\mid U^m)\le H(\tilde\theta)$, and $H(\tilde\theta)\le\log K$ since
$\tilde\theta$ takes at most $K$ values.
\end{proof}
\begin{lemma}[KL does not increase under post-processing]
\label{lem:kl_post}
$\mathrm{KL}(P_c\|Q_c)\ \le\ \mathrm{KL}(P\|Q)$.
\end{lemma}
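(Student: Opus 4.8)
The plan is to recognize Lemma~\ref{lem:kl_post} as an instance of the data-processing inequality for relative entropy, applied to the Markov kernel that $T$ induces from $\theta$ to $\tilde\theta$. The structural fact that does the work is that, because $T$ is data-independent and its auxiliary seed $W$ is independent of $(U^m,S^m,Y^m)$ and hence of $\theta$ under either measure, the conditional law of $\tilde\theta$ given $\theta$ is one and the same Markov kernel $\kappa(\cdot\mid\theta)$ whether $\theta\sim P$ or $\theta\sim Q$. So I would first form the two joint laws $\mathbb P(d\theta\,d\tilde\theta)=P(d\theta)\,\kappa(d\tilde\theta\mid\theta)$ and $\mathbb Q(d\theta\,d\tilde\theta)=Q(d\theta)\,\kappa(d\tilde\theta\mid\theta)$ on the product space.

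Next I would apply the chain rule for relative entropy to $\mathrm{KL}(\mathbb P\,\|\,\mathbb Q)$ in the two natural orders. Disintegrating along $\theta$ first gives $\mathrm{KL}(\mathbb P\,\|\,\mathbb Q)=\mathrm{KL}(P\,\|\,Q)+\mathbb E_{\theta\sim P}\big[\mathrm{KL}(\kappa(\cdot\mid\theta)\,\|\,\kappa(\cdot\mid\theta))\big]=\mathrm{KL}(P\,\|\,Q)$, since the inner divergence is identically zero. Disintegrating along $\tilde\theta$ first gives $\mathrm{KL}(\mathbb P\,\|\,\mathbb Q)=\mathrm{KL}(P_c\,\|\,Q_c)+\mathbb E_{\tilde\theta\sim P_c}\big[\mathrm{KL}(\mathbb P_{\theta\mid\tilde\theta}\,\|\,\mathbb Q_{\theta\mid\tilde\theta})\big]\ge \mathrm{KL}(P_c\,\|\,Q_c)$, because the conditional-divergence term is nonnegative. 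Combining the two equalities with the inequality yields $\mathrm{KL}(P_c\,\|\,Q_c)\le\mathrm{KL}(P\,\|\,Q)$, which is the claim.

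An equally short alternative, which sidesteps regular conditional probabilities, is to invoke the log-sum inequality directly. Writing $P,Q$ via densities $p,q$ against a common dominating measure, $P_c$ has ``density'' $p_c(\tilde\theta)=\int\kappa(\tilde\theta\mid\theta)\,p(\theta)\,d\theta$ and likewise for $q_c$; the log-sum inequality then gives, for each value of $\tilde\theta$ (there are at most $K$), $p_c(\tilde\theta)\log\frac{p_c(\tilde\theta)}{q_c(\tilde\theta)}\le\int\kappa(\tilde\theta\mid\theta)\,p(\theta)\,\log\frac{p(\theta)}{q(\theta)}\,d\theta$, and summing over $\tilde\theta$ with $\sum_{\tilde\theta}\kappa(\tilde\theta\mid\theta)=1$ collapses the right-hand side to $\mathrm{KL}(P\,\|\,Q)$.

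The only point I would state explicitly rather than gloss over is the claim that the kernel $\theta\mapsto\tilde\theta$ is the same under $P$ and $Q$: this is precisely where the hypotheses ``$T$ data-independent'' and ``$W$ independent of $(U^m,S^m,Y^m)$'' enter, and it is what forces the $\theta$-first disintegration to contribute a zero term. Beyond that, everything is a routine application of the chain rule for $\mathrm{KL}$ (valid on standard Borel spaces) or of the log-sum inequality, so I do not anticipate a genuine obstacle — the ``hard part'' is just bookkeeping the two disintegrations and confirming the independence structure.
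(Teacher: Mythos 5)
Your proof is correct. The paper states Lemma~\ref{lem:kl_post} without any proof, treating it as the standard data-processing inequality for relative entropy; your two-way chain-rule derivation (and the log-sum alternative) is a complete and correct proof of exactly that fact, and you rightly isolate the one substantive hypothesis — that the data-independence of $T$ and the independence of the seed $W$ make the kernel $\theta\mapsto\tilde\theta$ identical under $P$ and $Q$ — which is what forces the $\theta$-first disintegration to contribute zero.
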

\noindent Using $P_c,Q_c$ in Theorem~\ref{thm:pacbayes_basic} and Lemma~\ref{lem:kl_decomp} yields the
capacity-aware bound of Corollary~\ref{cor:capacity_upper} with $\rho\le \log K$. This approach is conceptually related to other works that leverage model compression or selection of small representative subsets to derive non-vacuous generalization bounds for overparameterized models \citep{leblanc_generalization_2025, lotfi_non-vacuous_2024}.

\section{Context Coarsening}
\label{app:coarsen}
Let $S'=T(S)$ for a measurable (data-release) channel $T$ such that $U\to S\to S'$ forms a Markov chain (that is, $S'$ is generated from $S$ without direct access to $U$).
Then by data processing $I(U;S')\le I(U;S)$.
All definitions and bounds in the paper hold verbatim with $S'$ in place of $S$,
with capacities recomputed as $\bar C_{\mathrm{tot}\mid S'}$ and dataset information term
$m\,I(U;S')$ replacing $m\,I(U;S)$.
Thus, for any preprocessor $T$, Corollary~\ref{cor:capacity_upper} becomes
\[
\mathbb E_{\mathcal D}\!\big[\mathrm{KL}(P\|Q)\big]
\ \le\ m\,\bar C_{\mathrm{tot}\mid S'}\ +\ m\,I(U;S')\ +\ \rho\ +\ \mathrm{KL}(p(\theta)\|Q).
\]
Choosing $T$ to enforce $I(U;S')\le \kappa$ makes the interference term $m\,\kappa$ explicit.

\section{Soft Loss–Index Link}
\label{app:soft_link}
Assume there exists a measurable $\phi:\mathcal A\times\mathcal S\to [M]$ and parameters
$\varepsilon,\Delta\ge 0$, $\zeta\in[0,1)$ such that for all $i$ and all $a$,
\[
\mathbb P\big\{\,\mathbb E[\ell(U^{(i)},a)\mid S]\ \ge\ \varepsilon+\Delta\ \big|\ \phi(a,S)\neq i\,\big\}\ \ge\ 1-\zeta.
\]
Then for any decoder $\pi$ with $\hat J=\phi(\pi(Y,S),S)$, writing $E\!\triangleq\!\{\hat J\neq J\}$ and
\(
G\!\triangleq\!\big\{\mathbb E[\ell(U^{(J)},\pi(Y,S))\mid S]\ge \varepsilon+\Delta\big\},
\)
we have
\[
\mathbb E\big[\ell(U,\pi(Y,S))\big]\ 
=\ \mathbb E\!\Big[\,\mathbb E\big[\ell(U^{(J)},\pi(Y,S))\mid S\big]\cdot \mathbf 1_{E}\Big]\ +\ \mathbb E\!\Big[\,\mathbb E\big[\ell(U^{(J)},\pi(Y,S))\mid S\big]\cdot \mathbf 1_{E^c}\Big]
\]
\[
\ge\ (\varepsilon+\Delta)\,\mathbb P(E\cap G),
\]
hence
\[
\mathbb E\big[\ell(U,\pi(Y,S))\big]\ \ge\ (\varepsilon+\Delta)\,\mathbb P(E)\ -\ (\varepsilon+\Delta)\,\mathbb P(E\cap G^c).
\]
By the assumption, $\mathbb P(G^c\mid E)\le \zeta$, so $\mathbb P(E\cap G^c)\le \zeta\,\mathbb P(E)\le \zeta$ and consequently
\[
\mathbb E\big[\ell(U,\pi(Y,S))\big]\ \ge\ (\varepsilon+\Delta)\,\mathbb P\{\hat J\neq J\}\ -\ \zeta.
\]
A slightly tighter but equivalent multiplicative form also holds:
\[
\mathbb E\big[\ell(U,\pi(Y,S))\big]\ \ge\ (\varepsilon+\Delta)\,(1-\zeta)\,\mathbb P\{\hat J\neq J\}.
\]
Consequently, Theorem~\ref{thm:unified_fano} holds with an additive $-\zeta$ (or multiplicative $(1-\\zeta)$) slack in the lower bound.

\end{document}